\theoremstyle{plain}
\newtheorem{theorem}{Theorem}[section]
\newtheorem{proposition}[theorem]{Proposition}
\newtheorem{lemma}[theorem]{Lemma}
\theoremstyle{definition}
\newtheorem{assumption}[theorem]{Assumption}
\theoremstyle{remark}
\newcommand{\R}{\mathbb{R}}
\DeclarePairedDelimiter\abs{\lvert}{\rvert}% | |
\DeclarePairedDelimiter\norm{\lVert}{\rVert}% || ||
\DeclarePairedDelimiter\innorm{\langle}{\rangle}% < >
\DeclareMathOperator{\St}{\mathcal{S}}
\DeclareMathOperator{\A}{\mathcal{A}}
\title{On the Convergence of Single-Timescale Actor-Critic}
\author{Navdeep Kumar, \\ Technion \\
\and
Priyank Agrawal \\
Columbia University
\and
Giorgia Ramponi \\
University of Zurich \\
\and
Kfir Yehuda Levy\\
Technion \\
\and
Shie Mannor\\
Technion\\
}
\begin{document}

\maketitle
\begin{abstract}
We analyze the global convergence of the single-timescale actor-critic (AC) algorithm for the infinite-horizon discounted Markov Decision Processes (MDPs) with finite state spaces. To this end, we introduce an elegant analytical framework for handling complex, coupled recursions inherent in the algorithm. Leveraging this framework, we establish that the algorithm converges to an $\epsilon$-close \textbf{globally optimal} policy with a sample complexity of \( O(\epsilon^{-3}) \). This significantly improves upon the existing complexity of $O(\epsilon^{-2})$ to achieve $\epsilon$-close  \textbf{stationary policy}, which is equivalent to the complexity of $O(\epsilon^{-4})$ to achieve $\epsilon$-close  \textbf{globally optimal} policy using gradient domination lemma. Furthermore, we demonstrate that to achieve this improvement, the step sizes for both the actor and critic must decay as \( O(k^{-\frac{2}{3}}) \) with iteration $k$, diverging from the conventional \( O(k^{-\frac{1}{2}}) \) rates commonly used in (non)convex optimization. 
 \end{abstract}

\section{Introduction}
Actor-critic algorithm, initially introduced in \cite{Actor_Critic_asymptotic_Convergence_Konda}, consist of two key components: the actor, which refines the policy towards an optimal solution based on feedback from the critic, and the critic, which evaluates the value of the current policy (specifically the Q-value). It has been adapted in various forms \cite{TRPO} and have emerged as one of the most successful methods in reinforcement learning \citep{mnih2015humanlevel, Silver_AlphaGo, openai2019solvingrubikscuberobot,Dsilver2020}.

Despite their remarkable empirical success, the theoretical convergence of actor-critic algorithms remains not well understood. One line of research explores a two-timescale version where the actor and the critic are effectively decoupled, greatly simplifying the analyses. This can either be achieved via a double-loop version, where the critic evaluates the policy in the inner loop, and the actor updates the policy in the outer loop \citep{AC_DoubleLoop1, agarwal2020theory, RPG_ConvRate, Local_Convergence_kumar2023samplecomplexityactorcriticmethod, AC_DoubleLopp2}, or via single-loop structure but the critic updates much faster than the actor~\citep{borkarBook}.
%This double-loop approach decouples the actor and critic, significantly simplifying the analysis. However, this framework is rarely implemented in practice. Another approach employs a single-loop structure but adopts a two-time scale paradigm \cite{borkarBook}, where the actor is updated much more slowly than the critic. 
In the later setup, the ratio of the learning rates of the actor and critic tends to zero with the number of iterations. Essentially, the critic perceives the actor as nearly stationary, while the actor views the critic as almost converged. %This two-time scale framework effectively decouples the actor and critic errors asymptotically, simplifying the analysis in the same way as the double-loop approach 
\cite{Actor_Critic_asymptotic_Convergence_Konda, Natural_Actor_Critic_Conv_Bhatnagar,AC_twotimescale1, AC_twotimescale2, AC_twotimescale3, AC_twotimescale4}. It is important to note that both frameworks are artificial constructs to ease the analysis, but they are often sample-inefficient and therefore seldom used in practical implementations~\citep{PG_local_conv_iid_finite}.

In this work, we focus on a single time-scale actor-critic framework where both the actor and the critic are updated with each sample using similar step sizes \cite{Sutton1998}. While this framework is more versatile and practical, but the theoretical analysis of single-time actor-critic algorithms faces significant challenges due to the strong coupling  between the actor and critic. Since both components evolve inseparably together with similar rates, the analytical challenge lies in understanding a stable error propagation schedule.
%influencing each other's updates, with their randomness also entangled making them prone to unstable error propagation.

For the first time, \cite{dicastro2009convergentonlinesingletime} established asymptotic convergence of the single time scale actor critic to a neighborhood of an optimal value. This was 
followed by the recent works \cite{PG_local_conv_iid_infinite, PG_local_conv_iid_finite, PG_local_conv_single_time_scale} demonstrating a sample complexity of \(O(\epsilon^{-2})\) for achieving an \(\epsilon\)-close \textbf{stationary} policy, where the squared norm of the gradient of the return is less than \(\epsilon\), under various settings. This corresponds to a sample complexity of \(O(\epsilon^{-4})\) for achieving an \(\epsilon\)-close globally \textbf{optimal} policy (see Proposition \ref{bg:local2global}). The question of whether this \(O(\epsilon^{-4})\) complexity can be further improved remains open, and this paper provides a favorable answer.

In this work, we first formulate the recursions for actor and critic errors which are quite complex. None of the actor and critic errors are monotonically decreasing. We then identify a Lyaponov term (sum of actor error and squared of critic error), and obtain its recursions independent of all the other terms. This Lyapunov recursion is  monotonically decreasing but  more challenging than in the exact gradient case found in \cite{PGConvRate, PG_local_ConvRate}, due to the presence of a time-dependent learning rate. To address this, we develop an elegant ODE tracking methodology for solving these recursions, yielding significantly improved bounds. Additionally, when this ODE tracking method is applied to the recursion for the exact gradient case, it produces better results compared to existing bounds, such as those in \cite{PG_softmax_ConvRates}.

%\paragraph{Contributions.} 

%We establish an improved global convergence of single time scale actor-critic methods in softmax-parameterized discounted reward MDPs with sample complexity of $O(\epsilon^{-3})$ to obtain a $\epsilon$-close global optimal policy. This significantly improves upon the existing rate of $O(\epsilon^{-4})$ which can be obtained from $O(\epsilon^{-2})$ complexity for $\epsilon$-close stationary policy \cite{PG_local_conv_iid_finite,PG_local_conv_single_time_scale}  . 

%The core technical contribution lies in the developing a simple and elegant ODE methodology for solving the underlying interdependent recursions. This methodology is potent and can be used for several other settings such minimax optimization, bi-level optimization, robust MDPs, and multi-agent RL.

\paragraph{Our contributions are summarized as follows:}

\begin{enumerate}
    \item \textbf{Improved Global Convergence Rate:} We establish a sharper global convergence result for single-timescale actor-critic algorithms in softmax-parameterized discounted MDPs. Our analysis shows a sample complexity of $O(\epsilon^{-3})$ to compute an $\epsilon$-optimal policy, improving upon the prior best rate of $O(\epsilon^{-4})$.
    
    \item \textbf{ODE-Based Methodology with Direct Global Guarantees:} Our core technical innovation is a streamlined ODE-based analysis for resolving the interdependent actor and critic updates. Unlike previous approaches that first bound convergence to stationary points (e.g., $O(\epsilon^{-2})$ for $\epsilon$-stationary policies), we directly bound the global sub-optimality gap $J^* - J^{\pi_k}$.
    
    \item \textbf{Broad Applicability of Techniques:} The techniques developed are concise and modular, and may extend naturally to related settings such as minimax optimization, bi-level optimization, robust MDPs, and multi-agent reinforcement learning and could be of independent interest.
\end{enumerate}

\subsection{Related works}\label{sec: related works}
Policy gradient based methods~\citet{Sutton1998, schulman2015trust, mnih2015humanlevel} have been well used in practice with empirical success exceeding beyond the value-based algorithms~\citet{UCRL2,UCRLVI,jin2018q,agrawal2024optimistic,PSQL}.  Naturally, its convergence properties of policy gradient has been of a great interests. Only, asymptotic convergence of policy gradient has been well-established in~\cite{williams1992simple,sutton1999policy,kakade2001natural,baxter2001infinite} until very recently as summarized below.

\paragraph{Projected Policy Gradient (PPG):} 
 Given oracle access to gradient, \cite{bhandari2024global,agarwal2020theory} established global convergence of the projected policy gradient (tabular setting) with an iteration complexity of  $O(\epsilon^{-2})$ in discounted reward setting. Following up, an improved recursion analysis, led to complexity of $O(\epsilon^{-1})$ \cite{PGConvRate}. Recently,  \cite{PG_Conv_JLiu_Elementaryanalysispolicygradient_linearConv} obtained an linear convergence was obtained for an large enough learning rate and also for aggressively increasing step sizes. Further, PPG is proven to find global optimal policy in finite steps \cite{PG_finite_step_conv}.  

% More recently finite time convergence guarantees were studied in~\cite{bhandari2024global, agarwal2020theory, PG_conv_average_reward,PG_finite_step_conv} with state-of-the-art iteration complexity of $O(\epsilon^{-1})$. Further, policy gradient enjoys linear convergence with either aggressive increasing step sizes (or large enough) \cite{PG_finite_step_conv, PG_Conv_JLiu_Elementaryanalysispolicygradient_linearConv} or for easy MDPs (with constant step sizes) \cite{PG_conv_average_reward}.

\paragraph{Softmax Parametrized Policy Gradient} 
Often in practice, parametrized  policies are used and softmax is an one of the most popular parametrization. Softmax policy gradient  \eqref{bg:PG:exact} enjoys  iteration complexity of $O(\epsilon^{-1})$  for global convergence \cite{PG_softmax_ConvRates, PG_Conv_JLiu_Elementaryanalysispolicygradient_linearConv}. This complexity is matching with lower bound of $O(\epsilon^{-1})$ established  in \cite{PG_softmax_ConvRates, PG_Conv_JLiu_Elementaryanalysispolicygradient_linearConv}.

\paragraph{Stochastic Policy Gradient Descent} Often the gradient is not available in practice, and is estimated via samples. Vanilla SGD (stochastic gradient descent) and stochastic variance reduced  gradient descent (SVRGD)  has sample complexity of $O(\epsilon^{-2})$  and $O(\epsilon^{-\frac{5}{3}})$ respectively, for achieving $\norm{\nabla J^\pi}^2_2\leq \epsilon$  (where $J^\pi$ is return of the policy $\pi$) \cite{xu2020improved}. This local convergence yields global convergence  of iteration complexity of $O(\epsilon^{-4})$, $O(\epsilon^{-\frac{10}{3}})$ for SGD and SVRGD respectively using Proposition \ref{bg:local2global}.
Further, SGD achieves second order stationary point with an iteration complexity of $O(\epsilon^{-9})$ \cite{zhang2020global}.

\paragraph{Single Time Scale Actor-critic Algorithm:} It is a class of algorithms where critic (gradient, value function) and actor (policy) is updated simultaneously. This is arguably the most popular algorithms used in many variants in practice \cite{Actor_Critic_asymptotic_Convergence_Konda,Natural_Actor_Critic_Conv_Bhatnagar, schulman2015trust, TRPO}. \cite{dicastro2009convergentonlinesingletime} first established asymptotic convergence of the single time scale actor-critic algorithm. Later, \cite{PG_local_conv_iid_finite,PG_local_conv_single_time_scale,PG_local_conv_iid_finite} established the local convergence of single time-scale actor-critic algorithm with ( see Table \ref{tab:relatedWork}) sample complexity of $O(\epsilon^{-2})$ for achieving $\norm{\nabla J^\pi}_2^2\leq \epsilon$. This yields global convergence ($J^*-J^\pi \leq \epsilon$ , where $J^*$ optimal return) with sample complexity of $O(\epsilon^{-4})$ using Gradient Domination Lemma as shown in Proposition \ref{bg:local2global} \cite{PG_local_conv_iid_finite}.

\paragraph{Two Time Scale (/Double Loop)  Actor Critic Algorithm.}
First, \cite{Actor_Critic_asymptotic_Convergence_Konda} showed convergence of actor-critic algorithm to a stationary point using two time scale analysis of \cite{borkarBook}.
The work \cite{SampleComplexityGaur24a} establishes $O(\epsilon^{-3})$ sample complexity of a actor-critic algorithm variant (see Algorithm 1 \cite{SampleComplexityGaur24a}). The algorithm uses $O(\epsilon^{-3})$ new samples for the global convergence. However, it maintains the buffer of $O(\epsilon^{-2})$ samples at each iteration. For achieving $\epsilon$-close global optimal policy, the algorithm requires $O(\epsilon^{-1})$ iteration, and each iteration repeatedly uses the samples from the buffer, $O(\epsilon^{-4})$ many times. In conclusion, the algorithm uses $O(\epsilon^{-3})$ new samples, using them $O(\epsilon^{-5})$ times in total, thereby significantly inflating the memory requirements and computational complexity. In comparison, our algorithm does not use any buffer and use new sample in each iteration.

% \cite{xu2020improved} establishes $(\epsilon^{-2})$ global sample complexity, using mini-batch size of $O(\epsilon^{-1})$ to estimate the gradient (critic). This is entirely different style of algorithm than ours.

\paragraph{Natural Actor Critic (NAC) Algorithms.} 
NAC algorithm is another class of algorithms \cite{ NAC_Amari, NAC_ShamKakade, NAC_Bagnell,NAC_JanPeters, NAC_BHATNAGAR} proposed to make the gradient updates independent of different policy parameterizations. It has linear convergence rate (iteration complexity of $O(\log\epsilon^{-1})$) under exact gradient setting \cite{Natural_Actor_Critic_Conv_Bhatnagar} which is much faster the vanilla gradient descent. Similarly, the sample based NAC algorithms \cite{SGanesh2024orderoptimalglobalconvergenceaverage} also enjoys better sample complexity of $O(\epsilon^{-2})$. \cite{PG_nonasymptotic} establishes the global convergence of the natural actor-critic algorithm with a sample complexity of \(O(\epsilon^{-4})\) in discounted reward MDPs. However, the natural actor-critic algorithm demands additional computations, which can be challenging. \cite{PG_global_wgdl} too establishes global convergence with sample complexity of $O(\epsilon^{-3})$, however, it requires an additional structural assumption on the problem which is highly restrictive. However,  NAC  requires the inversion of the Fisher Information Matrix (FIM) in the update rule. This inverse computation makes the implementation difficult and sometimes unfeasible (for an instance, FIM is not invertible in direct parametrization, if $d^\pi(s)=0$ for some $s$). We note that actor-critic is a very different algorithm than NAC, arguably the most useful and versatile, hence deserving its own independent study.

% \paragraph{Constrained MDP \cite{mondal2024sampleefficientconstrainedreinforcementlearning}} 

% \paragraph{Cite and compare \cite{NACwithEnt_NioHe}.}

% \paragraph{Read, cite and compare \cite{PG_largeDeviation_DanielKuhn}}

% \paragraph{} The subsequent paper is organized as follows. In Section~\ref{sec: prelim}, we establish key notations, definitions, and results from prior literature. In Section~\ref{sec: main policy gradient}, we show how we use our novel solution of recurrence to lead to a fine-grained convergence guarantee for policy gradient (known gradients). Finally in Section~\ref{sec: ac methods}, we describe our results for actor-critic methods

%Other works: Second order convergence \cite{mu2024secondorderconvergencebiasedpolicy}.\blue{needed?}

\begin{table*}\label{table: comparison table}
    \caption{Related Work: Sample Complexity of Single Time Scale Actor Critic }
    \label{tab:relatedWork}
    \centering
    \begin{tabular}{p{2.7cm}|c|p{1.7cm}|p{1cm}|p{1cm}|c}
    \toprule
    Work & Convergence &   Sample Complexity& Actor Step size $\eta_k$&Critic Step size $\beta_k$& Sampling \\
\midrule
% \cite{Actor_Critic_asymptotic_Convergence_Konda}   &  $\norm{\nabla J^{\pi}}\leq \epsilon$ 
%            & Asymptotic &$\eta_k=o(\beta_k)$&$ \sum\eta^2_k,\beta^2_k<\infty =\sum\eta_k,\beta_k$ \\\\
        
% \cite{Natural_Actor_Critic_Conv_Bhatnagar}   &  $\norm{\nabla J^{\pi}}\leq \epsilon$ 
%            & Asymptotic& $\eta_k=o(\beta_k)$&$ \sum\eta^2_k,\beta^2_k<\infty =\sum\eta_k,\beta_k$ \\\\
\cite{PG_local_conv_iid_finite}   &  $\norm{\nabla J^{\pi}}\leq \epsilon$ 
& $O(\epsilon^{-4})$&$k^{-\frac{1}{2}}$& $k^{-\frac{1}{2}}$&i.i.d. \\\\
\cite{PG_local_conv_iid_infinite}   &  $\norm{\nabla J^{\pi}}\leq \epsilon$
& $O(\epsilon^{-4})$&$k^{-\frac{1}{2}}$& $k^{-\frac{1}{2}}$& i.i.d.  \\\\
\cite{PG_local_conv_single_time_scale}   &  $\norm{\nabla J^{\pi}}\leq \epsilon$
& $O(\epsilon^{-4})$ &$k^{-\frac{1}{2}}$& $k^{-\frac{1}{2}}$ &Markovian\\
         \midrule
         \textbf{Ours}   & $J^*-J^\pi\leq \epsilon$ 
           & $O(\epsilon^{-3})$&$k^{-\frac{2}{3}}$ & $k^{-\frac{2}{3}}$ &i.i.d.\\
         \bottomrule
         \end{tabular}
\begin{tabular}{c}
$\norm{\nabla J^\pi}\leq \epsilon \implies J^*-J^\pi \leq c\epsilon$ for some constant $c$, see Proposition \ref{bg:local2global}.  These works \\are for different settings such average reward, discounted reward, finite state space,\\ and infinite state space, please refer to the individual work for more details.
\end{tabular}
\end{table*}

\section{Preliminaries}\label{sec: prelim}
We consider the class of infinite horizon discounted reward MDPs with finite state space $\St$ and finite action space $\A$ with discount factor $\gamma \in [0,1)$ \cite{Sutton1998,Puterman1994MarkovDP}. The underlying environment is modeled as a probability transition kernel denoted by $P$. We consider the class of randomized policies $\Pi=\{\pi:\St\to\Delta\A\}$, where a policy $\pi$ maps each state to a probability vector over the action space. The transition kernel corresponding to a policy $\pi$ is represented by $P^\pi:\St\to\St$, where $P^\pi(s'|s) = \sum_{a\in\A}\pi(a|s)P(s'|s,a)$ denotes the single step probability of moving from state $s$ to $s'$ under policy $\pi.$ Let $R(s,a)$ denote the single step reward obtained by taking action $a\in\A$ in state $s\in\St$. The single-step reward associated with a policy $\pi$ at state $s\in\St$ is defined as $R^\pi(s)=\sum_{a\in\A}\pi(a|s)R(s,a).$  The discounted average reward (or return) $J^\pi$ associated with a policy $\pi$ is defined as:
\begin{equation}\label{def:return}
J^\pi = \mathbb{E}\left[\sum_{n=0}^{\infty}{\gamma^n R^\pi(s_k)}\mid \pi,P,s_0\sim \mu\right] = \mu^T(I-\gamma P^\pi)^{-1}R^\pi,\nonumber
\end{equation}
where  $\mu\in\Delta \St$ denotes the initial state distribution. It can be alternatively expressed as $ J^\pi = (1-\gamma)^{-1}\sum_{\substack{s\in\St}}d^\pi(s)R^\pi(s),$
% \begin{align}
%     J^\pi = \sum_{\substack{s\in\S}}d^\pi(s)r^\pi(s),
% \end{align}
where $d^\pi = (1-\gamma)\mu^T(I-\gamma P^\pi)^{-1}$ is the stationary measure under the transition kernel $P^\pi$.  Value function $v^\pi := (I-\gamma P^\pi)^{-1}R^\pi $ satisfies the following  Bellman equation $
 v^\pi = R^\pi + \gamma P^\pi v^\pi $\citep{Puterman1994MarkovDP,bertdimitri07book}.
The Q-value function $Q^\pi\in\R^{\St\times\A}$ associated with a policy $\pi$ is defined as $
    Q^\pi(s,a) = R(s,a) + \gamma\sum_{\substack{s'\in\St}}P(s'|s,a) v^\pi(s') $
for all $  (s,a)\in\St\times\A$. For simplicity, we will also assume $\norm{R}_\infty\leq 1.$

In this paper, we  consider soft-max  policy parameterized by $\theta \in \mathbb{R}^{\St\times\A}$ as 
$\pi_\theta(a|s) = \frac{e^{\theta(s,a)}}{\sum_{a}e^{\theta(s,a)}}$ \cite{PG_softmax_ConvRates}. 
The objective is to obtain an optimal policy $\pi^*$ that maximizes the return $J^\pi$. We denote $J^*$ as a shorthand for the optimal return $J^{\pi^*}$. The exact policy gradient update is given as 
\begin{align}\label{bg:PG:exact}
    \theta_{k+1} :=\theta_k +\eta_k\nabla J^{\pi_{\theta_k}},
\end{align} where $\eta_k$ is the learning rate,
in  most vanilla form \cite{Sutton1998}. 
The policy gradient can be derived as 
\begin{align}
    \frac{\partial J^{\pi_\theta}}{\partial\theta(s,a)} 
&=(1-\gamma)^{-1}d^{\pi_\theta}(s)\pi_\theta(a|s)A^{\pi_\theta}(s,a),\nonumber
\end{align}
where $A^{\pi}(s,a) := Q^{\pi}(s,a)-v^\pi(s) $ is advantage function \cite{PG_softmax_ConvRates}. The return \(J^{\pi_\theta}\) is a highly non-concave function, making global convergence guarantees for the above policy gradient method very challenging. However, the return \(J^{\pi_\theta}\) is \(L = \frac{8}{(1-\gamma)^3}\)-smooth with respect to \(\theta\) \cite{PG_softmax_ConvRates}, leading to the following result.

% \begin{lemma}\label{bg:sufficientIncreaseLemma} [Sufficient Increase Lemma]\cite{PG_softmax_ConvRates} With $\eta_k = \frac{1}{L}$ in policy gradient in \eqref{bg:PG:exact}, we have the monotone improvement as
% \[
%  J^{k+1} - J^{\pi_{\theta_k}} \geq \frac{1}{2L}\norm{\nabla J^{\pi_{\theta_k}}}_2^2,\qquad \forall k\in\mathbb{N}\]
%  where $L=\frac{8}{(1-\gamma)^3}$ is smoothness constant of the return $J^{\pi_\theta}$ w.r.t. $\theta$.
% \end{lemma}
% The above result guarantees monotonic improvement at each iteration, ensuring convergence to a stationary point where the gradient becomes zero. Furthermore, the following result provides a crucial structural property of our problem, which is instrumental in achieving global convergence.

% \begin{lemma} \label{PDL}(Performance Difference Lemma, \cite{agarwal2020theory})
% Let $J^*$ be the globally optimal reward. Then for any $\pi\in\Pi$, the suboptimality of $J^\pi$ can be expressed as:
% \begin{equation*}
%     J^*-J^\pi = \frac{1}{1-\gamma}\sum_{s,a}d^{\pi^*}(s){Q}^{\pi}(s,a)[\pi^*(a|s)-\pi(a|s)].   
% \end{equation*}
% \end{lemma}
% Observe that the result equates the sub-optimality on RHS (a global quantity) with only the Q-value (a local quantity). This is crucially exploited in Gradient Domination Lemma \ref{bg:GDL} that upper bounds the sub-optimality with the norm of the gradient. 

\begin{lemma} (Gradient Domination Lemma, \cite{PG_softmax_ConvRates})\label{bg:GDL} The sub-optimality is upper bounded by the norm of the gradient as 
    \[ \norm{\nabla J^{\pi_{\theta_k}}}_2\geq \frac{c}{\sqrt{S}C_{PL}}\Bigm[J^*-J^{\pi_{\theta_k}}\Bigm],\]
where  $C_{PL} = \max_{k}\norm{\frac{d^{\pi^*}}{d^{\pi_{\theta_k}}}}_\infty$ is mismatch coefficient and $c= \min_{k}\min_{s}\pi_{\theta_k}(a^*(s)|s)$, 
\end{lemma}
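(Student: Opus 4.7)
The plan is to combine the performance difference lemma of Kakade--Langford with the softmax policy-gradient identity already recalled in the preliminaries, and to close the argument by a single application of Cauchy--Schwarz over the state index. Throughout, let $\pi := \pi_{\theta_k}$ and let $\pi^*$ be a deterministic optimal policy selecting action $a^*(s)$ at each state $s$; such a $\pi^*$ exists because the MDP is finite, and this is the $a^*(s)$ appearing in the definition of $c$.

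The first step is to apply the performance difference lemma to write
\begin{equation*}
J^* - J^{\pi} \;=\; (1-\gamma)^{-1} \sum_{s} d^{\pi^*}(s)\, A^{\pi}(s, a^*(s)),
\end{equation*}
where the inner sum over actions collapses thanks to the determinism of $\pi^*$. The second step is to eliminate the advantage in favor of a single coordinate of the gradient: solving the identity $\partial J^{\pi}/\partial\theta(s,a) = (1-\gamma)^{-1} d^{\pi}(s)\, \pi(a|s)\, A^{\pi}(s,a)$ for $A^{\pi}(s,a^*(s))$ and substituting back yields the importance-weighted representation
\begin{equation*}
J^* - J^{\pi} \;=\; \sum_s \frac{d^{\pi^*}(s)}{d^{\pi}(s)\,\pi(a^*(s)|s)}\,\frac{\partial J^{\pi}}{\partial\theta(s,a^*(s))}.
\end{equation*}

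The third and final step is Cauchy--Schwarz. Writing $w_s := d^{\pi^*}(s)/(d^{\pi}(s)\,\pi(a^*(s)|s))$ and $g_s := \partial J^{\pi}/\partial\theta(s,a^*(s))$, and noting that the $g_s$ are a subset of the coordinates of $\nabla J^{\pi}$, I get
\begin{equation*}
(J^* - J^\pi)^2 \;\leq\; \Bigl(\sum_s w_s^2\Bigr)\Bigl(\sum_s g_s^2\Bigr) \;\leq\; S\,\max_s w_s^2 \,\cdot\, \norm{\nabla J^\pi}_2^2.
\end{equation*}
The definitions of $C_{PL}$ and $c$ imply $\max_s w_s \leq C_{PL}/c$, and taking square roots recovers the claim. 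No real obstacle arises: the only place where care is needed is the reduction to a deterministic $\pi^*$, since a stochastic optimal policy would introduce an additional $\sqrt{|\A|}$ factor from having to sum over $(s,a)$ pairs instead of over $s$ alone. Everything else is algebraic bookkeeping and a single Cauchy--Schwarz step.
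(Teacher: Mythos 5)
Your proof is correct. Note that the paper does not prove this lemma at all --- it imports it verbatim from the cited reference \cite{PG_softmax_ConvRates} --- and your argument is essentially the standard one found there: performance difference lemma with a deterministic $\pi^*$, inversion of the softmax gradient identity $\partial J^{\pi}/\partial\theta(s,a)=(1-\gamma)^{-1}d^{\pi}(s)\pi(a|s)A^{\pi}(s,a)$ to introduce the mismatch weights, and a norm comparison over the $S$ coordinates $(s,a^*(s))$ (you package the last step as a single weighted Cauchy--Schwarz, whereas the original uses $\norm{\cdot}_2\geq\norm{\cdot}_1/\sqrt{S}$ followed by H\"older; the two are interchangeable and yield the identical constant $\sqrt{S}\,C_{PL}/c$). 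The only implicit requirement, $d^{\pi}(s)>0$ wherever $d^{\pi^*}(s)>0$ so that the weights $w_s$ are finite, is exactly the condition $\mu\succ 0$ that the paper already flags as necessary for $C_{PL}$ to be bounded, so nothing is missing.
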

The result states that the norm of the gradient vanishes only when the sub-optimality is zero. In other words, the gradient is zero only at the optimal policies. This, combined with the Sufficient Increase Lemma, directly leads to the global convergence of the policy gradient update rule in \eqref{bg:PG:exact}.

However, the above lemma requires the mismatch coefficient \(C_{PL}\) to be bounded, which can be ensured by setting the initial distribution \(\mu(s) > 0\) for all states. Unfortunately,  failure to ensure $\mu \succ 0$ may lead to local solutions \cite{PGTS}. Additionally, the result requires the constant \(c\) to be strictly greater than zero. This condition can be satisfied by initializing the parameterization with \(\theta_0 = 0\) or by ensuring it remains bounded. Furthermore, as the iterates progress towards an optimal policy, the constant \(c\) remains bounded away from zero.

\section{Main}\label{sec: main policy gradient}
In this work, we focus on the convergence of the widely used single time-scale actor-critic algorithm (\ref{main:alg:AC}), where the actor (policy) and critic (value function) are updated simultaneously \cite{Actor_Critic_asymptotic_Convergence_Konda, Sutton1998, PG_local_conv_iid_infinite, PG_local_conv_iid_finite, PG_local_conv_single_time_scale}. Notably, this algorithm operates with a single sample per iteration, without relying on batch processing or maintaining an experience replay buffer.

\begin{algorithm}[H]
\caption{Single Time Scale Actor Critic Algorithm}
\label{main:alg:AC}
\textbf{Input}: Stepsizes $\eta_k,\beta_k$  
\begin{algorithmic}[1] %[1] enables line numbers
\WHILE{not converged; $k = k+1$}
    \STATE Sample $s\sim d^{\pi_{
    \theta_k
    }}, a\sim \pi_{\theta_k}(\cdot|s)$  and get the next state-action $s'\sim P(\cdot|s,a), a'\sim \pi_{\theta_k}(\cdot|s')$ .\newline
    \STATE Policy update: 
    \[\theta_{k+1}(s,a) = \theta_k(s,a) + \eta_k (1-\gamma)^{-1}A(s,a),\]
    where $A(s,a) = Q(s,a)- v(s)$ and  $v(s) = \sum_{a}\pi_{\theta_k}(a|s)Q(s,a)$.\newline
    \STATE Q-value update: \[
    Q(s,a) = Q(s,a)+\beta_k\Bigm[R(s,a)+\gamma Q(s',a') - Q(s,a)\Bigm].\]
   
\ENDWHILE
\end{algorithmic}
\end{algorithm}

Our objective is to derive a policy \(\pi\) that maximizes the expected discounted return \(J^{\pi}\) using sampled data. However, due to the stochastic nature of Algorithm \ref{main:alg:AC}, we focus on analyzing the expected return \(E[J^{\pi_{\theta_k}}]\) at each iteration \(k\).

Note that the algorithm requires samples $s_k \sim d^{\pi_{\theta_k}}$ from the occupation measure at each iteration, which is a common assumption in most works on the discounted reward setting \cite{PG_local_ConvRate,Actor_Critic_asymptotic_Convergence_Konda,Natural_Actor_Critic_Conv_Bhatnagar, PG_local_conv_iid_infinite,Local_Convergence_kumar2023samplecomplexityactorcriticmethod,PG_local_conv_iid_finite}. This can be achieved by initializing the Markov chain with $s_0 \sim \mu$, and at each step $i$, continuing the chain with probability $\gamma$ by sampling $s_{i+1} \sim P^{\pi_{\theta_k}}(\cdot | s_i)$, or terminating the chain with probability $(1 - \gamma)$. Once the chain terminates, we randomly select a state uniformly as $s_k$. This process ensures that the state $s_k$ is sampled from $d^{\pi_{\theta_k}}$. However, this approach increases the average computational complexity by a factor of $\frac{1}{1 - \gamma}$. There are potentially more efficient approaches to achieve this sampling, and several studies \cite{AC_twotimescale4, AC_twotimescale3, PG_local_conv_single_time_scale} have investigated convergence analysis using Markovian sampling. However, we omit these considerations here for simplicity.

\begin{assumption}\label{main:ass:exploration}[Sufficient Exploration Assumption]
There exists a \(\lambda > 0\) such that:
    \[
    \innorm{Q^\pi-Q, D^\pi(I - \gamma P_\pi) Q^\pi-Q} \geq \lambda \norm{Q^\pi-Q}^2_2,
    \]
where $P_\pi((s',a'),(s,a)) = P(s'|s,a)\pi(a'|s')$ and $D^\pi((s',a'),(s,a)) = \mathbf{1}\bigm((s',a')=(s,a)\bigm)d^\pi(s)\pi(a|s)$.

\end{assumption}
Throughout this paper, we adopt the exploration assumption mentioned above, which is standard and, to the best of our knowledge, has been made in all prior works \cite{PG_local_conv_iid_finite, PG_local_conv_iid_infinite, PG_local_conv_single_time_scale, Natural_Actor_Critic_Conv_Bhatnagar, Actor_Critic_asymptotic_Convergence_Konda, PG_local_ConvRate}. 
Note that the both actor and critic evolving simultaneously, with actor updating the policy with the imprecise critic's feedback (Q-value) and critic tracking the Q-value of the changing policies. This complex interdependent analysis of error is the core subject of investigation of this paper.  However, the above assumption provides the bare minimum condition that the critic convergence to the Q-value of any fixed policy in expectation.Specifically, for any fixed policy \(\pi\), the Q-value update  given by (line 4 of Algorithm \ref{main:alg:AC}):  
\begin{align}\label{eq:Qeval:fixedPolicy}
 Q_{m+1}(s,a) = Q_m(s,a) + \beta_k\Bigm[R(s,a) + \gamma Q_m(s',a') - Q_m(s,a)\Bigm],   
\end{align}

where \(s \sim d^{\pi}, a \sim \pi(\cdot|s), s' \sim P(\cdot|s,a), a' \sim \pi(\cdot|s')\),  \(Q_m\) converges to the true Q-value \(Q^{\pi}\) in expectation, under this exploration assumption, More precisely, \(\|EQ_m - Q^{\pi}\| \leq c^m\) for some \(c < 1\) (see Lemma \ref{app:rs:exploration}).The above assumption is satisfied if all the coordiantes of Q-values are updated often enough. This can be ensured by having strictly positive support of initial state-distribution on all the states ($\min_{s}\mu(s) > 0$) and having sufficient exploratory policies.

% \begin{align*}\label{actor-critic}
% &\text{Actor:}\qquad    \theta_{k+1}(s,a) = \theta_k(s,a) + \eta_k A_{k}(s,a)\\
% &\text{Critic:}\qquad    Q_{k+1}(s,a) = Q_k(s,a) + \beta_k \Bigm[R(s,a)\\&\qquad\qquad\qquad +\gamma Q_k(s',a')-Q_k(s,a)\Bigm], \nonumber
% \end{align*}
% where $A_k(s,a) = Q_{k}(s,a)-\sum_{a}\pi_{\theta_k}(a|s)Q_{k}(s,a), s'\sim P(\cdot|s,a), a'\sim \pi_{\theta_k}(\cdot|s)$ and $\eta_k,\beta_k$ are learning rates.

\paragraph{Local Convergence To Global Convergence.} Convergence of single time-scale actor-critic (Algorithm \ref{main:alg:AC}) has been studied for a long time, \cite{Actor_Critic_asymptotic_Convergence_Konda, bhatnagar2009natural,PG_local_ConvRate,PG_local_conv_iid_finite,PG_local_conv_iid_infinite,PG_local_conv_single_time_scale}. These works establish local convergence bounding the average expected square of gradient of the return,  with following state-of-the-art rate
\[\sum_{k=1}^{K}\frac{1}{K}E\norm{\nabla J^{\pi_k}}^2\leq O(K^{-\frac{1}{2}}).\]
This local sample complexity of $O(\epsilon^{-2})$ translates to  global sample complexity of $O(\epsilon^{-4})$, as shown in the result below.

\begin{proposition} \label{bg:local2global} A local $\epsilon$-close stationary policy is equivalent to an $\sqrt{\epsilon}$-close global optimal policy. That is 
 \[E\norm{\nabla J^{\pi_{\theta_k}}}^2 \leq O(k^{-\frac{1}{2}})\quad \implies\quad J^*-EJ^{\pi_{\theta_k}}\leq O(k^{-\frac{1}{4}}).\] 
\end{proposition}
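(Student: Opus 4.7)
The plan is to reduce the claim to a direct consequence of the Gradient Domination Lemma (Lemma \ref{bg:GDL}) combined with Jensen's inequality. The lemma already provides a pointwise inequality relating the norm of the gradient to the sub-optimality gap, so the work is essentially to pass this inequality through the expectation and square root carefully.

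First I would square both sides of the Gradient Domination Lemma to obtain, for each iterate $\theta_k$,
\[
\norm{\nabla J^{\pi_{\theta_k}}}_2^2 \geq \frac{c^2}{S\, C_{PL}^2}\brs*{J^* - J^{\pi_{\theta_k}}}^2.
\]
Here both $c$ and $C_{PL}$ are treated as (deterministic) constants under the standing assumptions in the excerpt that $\mu\succ 0$ and that the softmax parameters remain bounded, so the lemma's constants apply uniformly across iterates. Taking expectations on both sides preserves the inequality, giving
\[
E\norm{\nabla J^{\pi_{\theta_k}}}_2^2 \geq \frac{c^2}{S\, C_{PL}^2}\, E\brs*{J^* - J^{\pi_{\theta_k}}}^2.
\]

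Second, I would apply Jensen's inequality to the right-hand side, using that $x\mapsto x^2$ is convex and that $J^* - J^{\pi_{\theta_k}}\geq 0$:
\[
E\brs*{J^* - J^{\pi_{\theta_k}}}^2 \;\geq\; \br*{E\brs*{J^* - J^{\pi_{\theta_k}}}}^2 \;=\; \br*{J^* - E J^{\pi_{\theta_k}}}^2.
\]
Combining the last two displays and using the hypothesis $E\norm{\nabla J^{\pi_{\theta_k}}}^2\leq O(k^{-1/2})$ yields
\[
\br*{J^* - E J^{\pi_{\theta_k}}}^2 \;\leq\; \frac{S\, C_{PL}^2}{c^2}\, E\norm{\nabla J^{\pi_{\theta_k}}}_2^2 \;\leq\; O(k^{-1/2}),
\]
and taking square roots gives the claimed bound $J^* - E J^{\pi_{\theta_k}}\leq O(k^{-1/4})$.

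There is essentially no analytic obstacle here; the only subtlety worth flagging in the write-up is justifying that the constants $c$ and $C_{PL}$ in the Gradient Domination Lemma can indeed be taken uniform over $k$ (and in particular do not degrade in expectation), which is exactly what is guaranteed by the softmax initialization and positive initial-state distribution conditions discussed immediately after Lemma \ref{bg:GDL}. The non-negativity of $J^* - J^{\pi_{\theta_k}}$ used in the Jensen step is also standard, since $\pi^*$ is globally optimal.
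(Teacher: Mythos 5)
Your proof is correct and follows essentially the same route as the paper's: square the Gradient Domination Lemma, take expectations, and apply Jensen's inequality to pull the expectation inside the square. If anything, your write-up is the more careful of the two (the paper's displayed chain elides the squaring of the gradient-norm side), and your remark about the uniformity of $c$ and $C_{PL}$ over iterates is a worthwhile addition.
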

\begin{proof}
The proof follows directly from Gradient Domination Lemma \ref{bg:GDL} and Jensen's inequality, with more details in the appendix.
\end{proof}

 Now we present below the main result of the paper that  proves the convergence of the Algorithm \ref{main:alg:AC} with sample complexity of $O(\epsilon^{-3})$ to achieve $\epsilon$-close global optimal policy. \begin{theorem} [Main Result] For  step size $\beta_k,\eta_k  = O(k^{-\frac{2}{3}}) $  in Algorithm \ref{main:alg:AC},   we have 
\[J^*-EJ^{\pi_{\theta_k}} \leq  O(k^{-\frac{1}{3}}),\qquad \forall k\geq 0 .\]
\end{theorem}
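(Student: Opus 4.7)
My plan follows the blueprint sketched in the introduction: introduce a scalar Lyapunov potential that mixes the actor sub-optimality with the squared critic error, derive a single self-improving recursion for it, and then read off the $O(k^{-1/3})$ rate by comparison with a continuous-time ODE whose drift matches the $\Theta(k^{-2/3})$ step-size schedule. Let $a_k := J^* - \mathbb{E} J^{\pi_{\theta_k}}$, $c_k^2 := \mathbb{E}\|Q_k - Q^{\pi_{\theta_k}}\|_2^2$, and work with $L_k := a_k + \alpha\, c_k^2$ for a small constant $\alpha>0$ to be chosen at the end.

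\textbf{Actor recursion.} Using $L$-smoothness of $J^{\pi_\theta}$ with $L = 8/(1-\gamma)^3$, I would Taylor-expand $J^{\pi_{\theta_{k+1}}}$ around $\theta_k$ and take a conditional expectation. The single-sample update of line~3 of Algorithm~\ref{main:alg:AC} is unbiased for the softmax policy gradient $\nabla J^{\pi_{\theta_k}}$ up to a bias proportional to the critic error $Q_k - Q^{\pi_{\theta_k}}$. After a Young split of the resulting cross-term and the Gradient Domination Lemma~\ref{bg:GDL} (which yields $\|\nabla J^{\pi_{\theta_k}}\|^2 \geq \kappa\, a_k^2$ with $\kappa>0$ along the iterates under the bounded mismatch coefficient and $c$-floor noted after the lemma), this produces
\[a_{k+1} \leq a_k - \tfrac{\kappa}{2}\eta_k\, a_k^2 + C_1 \eta_k\, c_k^2 + C_2 \eta_k^2.\]

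\textbf{Critic recursion and Lyapunov.} Assumption~\ref{main:ass:exploration} implies that for any \emph{fixed} target policy the SARSA step of line~4 contracts $\mathbb{E}\|Q_k-Q^\pi\|^2$ by the factor $(1-2\lambda\beta_k)$ with $O(\beta_k^2)$ stochastic noise. Because here the target drifts with the actor and $\|Q^{\pi_{\theta_{k+1}}}-Q^{\pi_{\theta_k}}\|\leq O(\eta_k)$ (bounded softmax Jacobian and bounded advantage), a Young decomposition with weight $\epsilon=\lambda\beta_k$ gives
\[c_{k+1}^2 \leq (1-\lambda\beta_k)\, c_k^2 + C_3\beta_k^2 + C_4\, \tfrac{\eta_k^2}{\beta_k}.\]
Taking $\alpha$ so that $\alpha\lambda\beta_k \geq 2C_1\eta_k$ — feasible precisely because $\beta_k=\Theta(\eta_k)$, the defining feature of the single-timescale regime — absorbs the bias term $C_1\eta_k c_k^2$ in the actor bound against the $\lambda\beta_k c_k^2$ contraction in the critic bound. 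Summing the two recursions and splitting into the two sub-cases ``actor dominates $L_k$'' and ``critic dominates $L_k$'' (in each the retained negative term is at least a constant multiple of $\eta_k L_k^2$) yields the clean scalar recursion
\[L_{k+1} \leq L_k - c_6\, \eta_k\, L_k^2 + C_5\, \eta_k^2.\]

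\textbf{ODE tracking and main obstacle.} The last step is to compare this discrete inequality with the continuous relaxation $\dot L(t) = -c_6 L(t)^2 \eta(t) + C_5 \eta(t)^2$ for $\eta(t)=t^{-2/3}$. The ansatz $L(t) = K t^{-1/3}$ satisfies $\dot L = -\tfrac{K}{3}t^{-4/3}$, and substituting into the ODE produces the algebraic balance $\tfrac{K}{3} \geq c_6 K^2 - C_5$, which has a nonempty range of admissible $K$ (any $K$ large enough that both sides are positive and the constants work out). An induction $L_k \leq K k^{-1/3}$, using the standard step-size-summability comparison, transfers this bound to the algorithm and gives $a_k \leq L_k = O(k^{-1/3})$, matching the claim after multiplying by $1/\eta_k = k^{2/3}$ to recover the $O(\epsilon^{-3})$ sample complexity. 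The hardest step, in my view, is the Lyapunov collapse in the previous paragraph: the actor's decrease is \emph{quadratic} in $a_k$ (via Gradient Domination) whereas the critic's is only \emph{linear} in $c_k^2$, so producing a clean $-\eta_k L_k^2$ term forces $c_k^2$ to decay strictly faster than $a_k$, namely at rate $O(k^{-2/3})$ against $a_k = O(k^{-1/3})$. Sustaining this two-rate ansatz through the induction is precisely what pins down the unconventional $\Theta(k^{-2/3})$ schedule rather than the familiar $\Theta(k^{-1/2})$, and it is what allows the ODE comparison to close self-consistently.
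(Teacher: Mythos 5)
Your overall architecture matches the paper's: a Lyapunov potential $a_k+\alpha z_k^2$, a combined recursion of the form $L_{k+1}\le L_k-c\,\eta_k L_k^2+C\eta_k^2$, and an ODE comparison that pins the $k^{-2/3}$ step-size schedule and the $k^{-1/3}$ rate. However, there is a genuine gap in your critic recursion, and it sits exactly at the step that makes single-timescale analysis hard. You handle the ``moving goalpost'' term by bounding $\norm{Q^{\pi_{\theta_{k+1}}}-Q^{\pi_{\theta_k}}}\le O(\eta_k)$ and applying a Young split with weight $\epsilon=\lambda\beta_k$, which produces the additive term $C_4\eta_k^2/\beta_k$. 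In the single-timescale regime $\beta_k=\Theta(\eta_k)$ this term is $\Theta(\eta_k)$, \emph{not} $O(\eta_k^2)$: the fixed point of $c_{k+1}^2\le(1-\lambda\beta_k)c_k^2+C_4\eta_k^2/\beta_k$ is $c_k^2\approx C_4/(\lambda c_\beta^2)$, a constant, so under your bound the critic error never decays and the claimed collapse to $L_{k+1}\le L_k-c_6\eta_k L_k^2+C_5\eta_k^2$ does not follow from your two displayed recursions. (Making $\eta_k^2/\beta_k=o(\beta_k)$ would force $\eta_k=o(\beta_k)$, i.e.\ a two-timescale separation, which is precisely what the theorem avoids.)

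The paper's fix is to treat the drift more finely: expand $Q^{k}-Q^{k+1}=-\nabla Q^k(\theta_{k+1}-\theta_k)+O(\norm{\theta_{k+1}-\theta_k}^2)$, take the conditional expectation of the update direction (so that $E[\theta_{k+1}-\theta_k]=\tfrac{\eta_k}{1-\gamma}d^k\odot A_k$), and use $\norm{\nabla Q^k(d^k\odot A_k)}\lesssim\norm{\nabla J^k}$ (Proposition \ref{app:rs:nablaQ2nablaJ}). The leading-order drift contribution to $z_{k+1}^2$ is then $c_7\eta_k y_k z_k$ plus genuinely second-order $O(\eta_k^2)$ terms (Lemma \ref{main:rs:Critic:conv}); the cross term $\eta_k y_k z_k$ is absorbed by AM--GM against the negative $-\eta_k y_k^2$ and $-\beta_k z_k^2$ drifts via the choice of $c_\beta$. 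Without this step your argument stalls. Two smaller points: your two-rate ansatz $c_k^2=O(k^{-2/3})$ is both unnecessary and inconsistent with what the combined Lyapunov bound actually delivers (the paper obtains only $z_k^2=O(k^{-1/3})$, i.e.\ $z_k=O(k^{-1/6})$); and your ODE balance $\tfrac{K}{3}\ge c_6K^2-C_5$ points the inequality the wrong way for an upper bound on $L$ --- the induction needs $L(t)=Kt^{-1/3}$ to be a \emph{super}solution, which the paper arranges by coupling $\eta_k\propto u_k^2$ and using monotonicity of $\nu\mapsto\nu-\alpha_k^2\nu^2$ on the relevant range.
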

The above result  significant improves upon the existing sample complexity of $O(\epsilon^{-4})$ \cite{PG_local_conv_iid_finite, PG_local_conv_iid_infinite, PG_local_conv_single_time_scale} as summarized in Table \ref{tab:relatedWork}. The convergence analysis consists of following three main components, discussed in details in the section next.
\begin{enumerate}
    \item \textbf{Deriving Recursions for Actor and Critic Errors:}  
    The first step involves formulating the recursions for the actor and critic errors, which are inherently complex and interconnected. This step is inspired by the approach outlined in \cite{PG_local_conv_single_time_scale}.
    
    \item \textbf{Identifying a well behaved Lyapunov Term:} 
    While prior works utilize the standard convex-optimization technique to rearrange the recursion, expressing the ``norm of the gradient'' through a telescoping sum to establish local convergence \cite{PG_local_conv_single_time_scale}, this work takes a novel direction. Specifically, it leverages the additional problem structure, encapsulated in the Gradient Domination Lemma, to identify a Lyapunov term—defined as the sum of the actor error and the square of the critic error—and derive a Lyapunov recursion.
    
    \item \textbf{Developing an elegant  ODE Tracking Method to Bound the Lyapunov Recursion:}  
    The derived Lyapunov recursion poses significant challenges compared to the exact gradient case studied in \cite{PG_ConvRate}, primarily due to the presence of time-decaying learning rates. To address this, we develop an elegant ODE tracking methodology that enables us to establish bounds on the Lyapunov recursion. These bounds, in turn, yield precise characterizations of both the actor and critic errors.
\end{enumerate}

\section{Convergence Analysis}
In this section, we present the convergence analysis of Algorithm \ref{main:alg:AC}, but first, we introduce some shorthand notations for clarity. Throughout the paper, we use the following conventions:  
\[
J^k = J^{\pi_{\theta_k}}, \quad A^k = A^{\pi_{\theta_k}}, \quad Q^k = Q^{\pi_{\theta_k}}, \quad d^k = d^{\pi_{\theta_k}}.
\]  

Additionally, we define:  
\begin{itemize}
    \item \(a_k := E[J^* - J^k]\), which represents the expected sub-optimality.  
    \item \(z_k := \sqrt{E\|Q_k - Q^k\|^2}\), which denotes the expected critic tracking error.  
    \item \(y_k := \sqrt{E\|\nabla J^k\|^2}\), which denotes the expected norm of the gradient.
\end{itemize}  
We summarize all the useful constants in the Table \ref{tb:main:constants}.
We begin by deriving an actor recursion, which is essentially a sufficient increase lemma for our noisy and biased gradient ascent (Line 3 of Algorithm \ref{main:alg:AC}). This recursion arises from the smoothness properties of the return and serves as an extension of its non-noisy version presented in \cite{PG_softmax_ConvRates}.

\begin{lemma}\label{main:rs:sub-optimality:Rec} [Actor Recursion] Let $\theta_k$ be the iterates from Algorithm \ref{main:alg:AC}, then the sub-optimality decreases as 
\[a_{k+1} \leq a_k - c_1\eta_ky_k^2+c_2\eta_ky_kz_k + c_3\eta^2_k.\]
\end{lemma}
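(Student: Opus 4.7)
The plan is to derive the actor recursion as a noisy, biased sufficient-increase inequality obtained from the $L$-smoothness of $J^{\pi_\theta}$ with $L=8/(1-\gamma)^3$. Writing the actor update in line~3 as $\theta_{k+1}=\theta_k+\eta_k \hat g_k$, where $\hat g_k(s,a)=(1-\gamma)^{-1}\mathbf{1}[(s_k,a_k)=(s,a)]\,\tilde A_k(s_k,a_k)$ with $\tilde A_k(s,a):=Q_k(s,a)-\sum_{a'}\pi_{\theta_k}(a'|s)Q_k(s,a')$, smoothness gives
\[
J^{k+1}\ \geq\ J^{k}+\eta_k\innorm{\nabla J^{k},\hat g_k}-\tfrac{L}{2}\eta_k^2\norm{\hat g_k}^2.
\]
Subtracting $J^*$ and taking conditional expectation given the history $\mathcal F_k$ (which determines $\theta_k$ and $Q_k$) turns this into an upper bound on $J^*-J^{k+1}$ in terms of $\innorm{\nabla J^k, E[\hat g_k|\mathcal F_k]}$ and $E[\norm{\hat g_k}^2|\mathcal F_k]$.

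Next, the key computation is identifying the bias. Because $s_k\sim d^k$ and $a_k\sim\pi_{\theta_k}(\cdot|s_k)$, a direct calculation yields $E[\hat g_k|\mathcal F_k](s,a)=(1-\gamma)^{-1}d^k(s)\pi_{\theta_k}(a|s)\tilde A_k(s,a)$, which equals the true policy gradient with $A^k$ replaced by $\tilde A_k$. Hence the bias obeys
\[
\bigl\|E[\hat g_k|\mathcal F_k]-\nabla J^k\bigr\|\ \leq\ C_1\,\norm{Q_k-Q^k},
\]
since $\tilde A_k-A^k=(Q_k-Q^k)-\sum_{a}\pi_{\theta_k}(a|\cdot)(Q_k-Q^k)(\cdot,a)$ and the prefactors $(1-\gamma)^{-1}d^k\pi_{\theta_k}$ are uniformly bounded by $1/(1-\gamma)$. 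Plugging this into the inner product via Cauchy–Schwarz gives
\[
\innorm{\nabla J^k,E[\hat g_k|\mathcal F_k]}\ \geq\ \norm{\nabla J^k}^2-C_1\norm{\nabla J^k}\,\norm{Q_k-Q^k}.
\]

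For the second-moment term I would argue that $\|\hat g_k\|^2\leq (1-\gamma)^{-2}\|\tilde A_k\|_\infty^2\leq 4(1-\gamma)^{-2}\|Q_k\|_\infty^2$. This is the one point where care is needed: to collapse this into a constant $c_3$ independent of $k$, I would establish, as a preliminary lemma, that the critic iterates remain uniformly bounded, $\|Q_k\|_\infty\leq C_Q$, using $\|R\|_\infty\leq 1$, $\beta_k\leq 1$, and an induction on the critic update. Given this uniform bound, $E[\norm{\hat g_k}^2|\mathcal F_k]\leq C_2$ for a constant $C_2$ depending only on $\gamma$ and $C_Q$.

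Combining the three pieces and taking full expectation produces
\[
a_{k+1}\ \leq\ a_k-\eta_k\,E\norm{\nabla J^k}^2+C_1\eta_k\,E\!\brs*{\norm{\nabla J^k}\,\norm{Q_k-Q^k}}+\tfrac{L}{2}C_2\,\eta_k^2.
\]
A final application of Cauchy–Schwarz to the cross term, $E[\norm{\nabla J^k}\,\norm{Q_k-Q^k}]\leq \sqrt{E\|\nabla J^k\|^2}\sqrt{E\|Q_k-Q^k\|^2}=y_kz_k$, together with $E\norm{\nabla J^k}^2=y_k^2$, yields the claimed form with $c_1=1$, $c_2=C_1$, and $c_3=\tfrac{L}{2}C_2$. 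The main obstacle is the uniform boundedness of $\|Q_k\|_\infty$ needed to make $c_3$ a true constant; once that is in hand the rest is smoothness plus two applications of Cauchy–Schwarz.
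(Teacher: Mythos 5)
Your proposal is correct and follows essentially the same route as the paper: smoothness of $J^{\pi_\theta}$ gives a sufficient-increase inequality, the conditional expectation over $(s_k,a_k)\sim d^k\otimes\pi_{\theta_k}$ turns the indicator update into $d^k\odot \tilde A_k$, the bias is isolated as $d^k\odot(\tilde A_k - A^k)$ and bounded by $2\|Q_k-Q^k\|$ via Cauchy--Schwarz, and a second Cauchy--Schwarz in expectation yields the $y_k z_k$ cross term. The only difference is that you explicitly flag the uniform bound $\|Q_k\|_\infty\le C_Q$ needed for the $\eta_k^2$ term, which the paper uses implicitly (via $\|A_k\|_\infty\le 2/(1-\gamma)$) without stating the induction; making that lemma explicit is a genuine, if minor, improvement in rigor.
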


The recursion above illustrates the dependence of sub-optimality progression on various terms. The second term, \(\frac{\eta_k y_k^2}{1-\gamma}\), indicates that the sub-optimality decreases proportionally to the square of the gradient norm and the learning rate, which is consistent with the expected behavior of gradient ascent on a smooth function in standard optimization. The term \(\frac{2\eta_k y_k z_k}{1-\gamma}\) represents the bias arising from the error in Q-value estimation (critic error), implying that higher critic estimation error reduces the improvement in the policy. Finally, the term \(\frac{2L\eta_k^2}{(1-\gamma)^4}\) accounts for the variance in the stochastic update of the policy .

Now, we shift our focus to the critic error. The exploration Assumption \ref{main:ass:exploration} ensures the evaluation of the policy (Q-value estimation in expectation) through samples with respect to a fixed policy. However, in Algorithm \ref{main:alg:AC}, the policy changes at every iteration, which makes the derivation of the result below somewhat more challenging.

\begin{lemma}\label{main:rs:Critic:conv} [Critic Recursion] In Algorithm \ref{main:alg:AC}, critic error follows the following recursion
\begin{align*}
    z^2_{k+1} &\leq (1-c_4
    \beta_k)z^2_k+ c_5\beta^2_k+c_6\eta_k^2 +c_7\eta_ky_kz_k, 
\end{align*}
where constants $c_i$ are defined in the appendix.
\end{lemma}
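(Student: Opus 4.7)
The plan is to expand the squared critic error and take expectation. First I would decompose the one-step evolution as
\[
Q_{k+1} - Q^{k+1} \;=\; \Delta_k \;-\; \delta_k \;+\; \beta_k\, g_k,
\]
where $\Delta_k := Q_k - Q^k$, the drift $\delta_k := Q^{k+1} - Q^k$ captures the change of the target Q-value caused by the actor update, and $g_k := [R(s,a) + \gamma Q_k(s',a') - Q_k(s,a)]\,e_{(s,a)}$ is the sparse stochastic TD vector at the sampled coordinate. Squaring and taking conditional expectation given the filtration $\mathcal{F}_k$ containing $(\theta_k, Q_k)$ yields six terms, which I would organize into a contraction piece from $\beta_k g_k$, a drift piece from $\delta_k$, and their cross terms.

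For the contraction, a direct calculation over the sampling distribution $(s,a,s',a')$ gives $E[g_k\mid\mathcal{F}_k] = -D^{\pi_k}(I-\gamma P_{\pi_k})\Delta_k$, so Assumption~\ref{main:ass:exploration} yields $\langle \Delta_k, E[g_k\mid\mathcal{F}_k]\rangle \le -\lambda\|\Delta_k\|^2$. Combined with the deterministic bound $\|g_k\| = O(1/(1-\gamma))$ (from $|R|\le 1$ and boundedness of the critic iterates), this produces $(1 - 2\lambda\beta_k)\|\Delta_k\|^2 + O(\beta_k^2)$, matching the $(1-c_4\beta_k)z_k^2$ and $c_5\beta_k^2$ terms. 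The quadratic drift contribution is controlled by the Lipschitz bound $\|\delta_k\| \le L_Q\|\theta_{k+1}-\theta_k\| = O(\eta_k)$, available for softmax policies, producing the $c_6\eta_k^2$ term; the mixed term $2\beta_k\langle g_k,\delta_k\rangle$ is absorbed via Young's inequality into the existing $O(\beta_k^2 + \eta_k^2)$ contributions.

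The main obstacle is the cross term $-2\langle \Delta_k, \delta_k\rangle$: a naive Cauchy--Schwarz bound produces only $O(\eta_k z_k)$, which would break the improved rate. Since $\Delta_k$ is $\mathcal{F}_k$-measurable, only $E[\delta_k\mid\mathcal{F}_k]$ survives, so I would Taylor-expand $Q^{\pi_\theta}$ around $\theta_k$ using smoothness of the softmax parametrization:
\[
\delta_k \;=\; (\nabla_\theta Q^{\pi_{\theta_k}})^{\!\top}(\theta_{k+1}-\theta_k) \;+\; O(\|\theta_{k+1}-\theta_k\|^2).
\]
Substituting the actor update $\theta_{k+1}-\theta_k = \eta_k(1-\gamma)^{-1}A_k(s,a)\,e_{(s,a)}$ and averaging over the sample, the sampling distribution recovers precisely the policy-gradient formula plus a critic-bias term,
\[
E[\theta_{k+1}-\theta_k\mid\mathcal{F}_k] \;=\; \eta_k\nabla J^k \;+\; \eta_k(1-\gamma)^{-1}D^{\pi_k}(A_k - A^k),
\]
with $\|A_k - A^k\| \le 2\|\Delta_k\|$. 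Hence $\|E[\delta_k\mid\mathcal{F}_k]\| \le O\bigl(\eta_k\|\nabla J^k\| + \eta_k\|\Delta_k\| + \eta_k^2\bigr)$, and applying $E[XY]\le\sqrt{EX^2\,EY^2}$ to the cross term extracts exactly the required $c_7\eta_k y_k z_k$ contribution; the residual $\eta_k z_k^2$ is absorbed into $(1-c_4\beta_k)z_k^2$ by choosing $c_4 < 2\lambda$ and using $\eta_k = O(\beta_k)$, while $\eta_k^2 z_k$ is swept into $c_6\eta_k^2$ via the boundedness of $z_k$.
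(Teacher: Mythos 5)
Your proposal is correct and follows the same skeleton as the paper's proof in Lemma \ref{main:rs:Critic:conv apx}: the same three-part decomposition $Q_{k+1}-Q^{k+1}=\Delta_k+\beta_k g_k-\delta_k$, the exploration assumption applied to the conditional mean of the TD direction to get the $(1-2\lambda\beta_k)z_k^2$ contraction, Young's inequality for the $g_k$--$\delta_k$ cross term, and a first-order Taylor expansion of $Q^{\pi_\theta}$ combined with the tower property to handle the dangerous $\langle\Delta_k,\delta_k\rangle$ term. The one genuine divergence is in how the linearized drift $\nabla Q^k\,E[\theta_{k+1}-\theta_k\mid\mathcal{F}_k]$ is bounded. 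The paper (Proposition \ref{app:rs:nablaQ2nablaJ}) keeps the critic's advantage $A_k$ intact, bounds it only in $\ell_\infty$, and extracts the factor $\|\nabla J^k\|$ from the $d^k\odot A^k$ sitting \emph{inside} $\nabla Q^k$ itself; this yields exactly $c_7\eta_k y_k z_k$ with no residue. You instead split $A_k=A^k+(A_k-A^k)$, recover $\eta_k\nabla J^k$ from the first piece via an operator-norm bound on $\nabla Q^k$, and pay an additional $O(\eta_k z_k^2)$ from the second piece, which you then fold into the contraction using $\eta_k\le C\beta_k$. Both routes are sound; yours is the more standard bias-decomposition argument but weakens the effective contraction constant $c_4$ (from $2\lambda$ to $2\lambda-O(\eta_k/\beta_k)$) and therefore imposes an explicit lower bound on the ratio $c_\beta=\beta_k/\eta_k$, whereas the paper's computation leaves the contraction untouched and defers all constraints on $c_\beta$ to the later Lyapunov combination. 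Since the lemma leaves the constants $c_i$ unspecified and the framework already fixes $\beta_k=c_\beta\eta_k$ for a problem-dependent constant, this difference is cosmetic rather than substantive.
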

The term \((1-c_4\beta_k)z^2_k\) represents the geometric decrease of the critic error, as the Q-value is a contraction operator. The terms \(c_5\beta_k^2 \) and \( c_6\eta_k^2\) arise from the variance in the critic and policy updates. Finally, the term \(c_7\eta_k y_k z_k\) reflects the effect of the "moving goalpost," where the critic evaluates a policy that changes in each iteration by an amount proportional to \(y_k\).

\begin{lemma}[Gradient Domination] The sub-optimlaity is upper bound by gradient as 
      \[a_k\leq c_8y_k.\] 
\end{lemma}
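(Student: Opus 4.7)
My plan is to derive the stated bound as a direct probabilistic corollary of the (pointwise) Gradient Domination Lemma~\ref{bg:GDL} combined with Jensen's inequality for the square root. Concretely, Lemma~\ref{bg:GDL} asserts that, for every realization of the random iterate $\theta_k$,
\[
J^* - J^{\pi_{\theta_k}} \;\leq\; \frac{\sqrt{S}\,C_{PL}}{c}\,\|\nabla J^{\pi_{\theta_k}}\|_2,
\]
where the constants $c$ and $C_{PL}$ are, by assumption, bounded uniformly over $k$ (this is the same standing assumption used to invoke Lemma~\ref{bg:GDL} elsewhere in the paper, ensured by $\mu\succ 0$ and the fact that $\pi_{\theta_k}(a^*(s)\mid s)$ stays bounded away from zero).

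The first step is to take expectations on both sides of the above pointwise inequality. Since $J^* - J^{\pi_{\theta_k}} \geq 0$ almost surely and the deterministic constant $\sqrt{S}C_{PL}/c$ can be absorbed into a single constant $c_8$, this yields
\[
a_k \;=\; E[J^* - J^{\pi_{\theta_k}}] \;\leq\; c_8 \, E\|\nabla J^{\pi_{\theta_k}}\|_2.
\]

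The second step is to pass from the $L^1$ norm of the gradient to the $L^2$ root-mean-square quantity $y_k$. By Jensen's inequality applied to the concave function $x\mapsto \sqrt{x}$,
\[
E\|\nabla J^{\pi_{\theta_k}}\|_2 \;=\; E\!\sqrt{\|\nabla J^{\pi_{\theta_k}}\|_2^2} \;\leq\; \sqrt{E\|\nabla J^{\pi_{\theta_k}}\|_2^2} \;=\; y_k.
\]
Chaining the two inequalities gives $a_k \leq c_8 y_k$, as claimed.

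The only subtle point, and the one I would emphasize, is justifying that the pointwise constant in Lemma~\ref{bg:GDL} can be replaced by a deterministic $c_8$ before taking expectations. This requires the uniform-in-$k$ boundedness of both the mismatch coefficient $C_{PL}$ and the minimum optimal-action probability $c$. I would simply appeal to the same standing conditions already discussed immediately after Lemma~\ref{bg:GDL} (positive initial distribution and bounded iterates under softmax initialization at zero), so the argument reduces to the two short steps above; no further obstacle is anticipated.
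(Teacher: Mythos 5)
Your proposal is correct and matches the paper's own argument (Proposition~\ref{app:rs:GDLy}): take expectations of the pointwise Gradient Domination Lemma~\ref{bg:GDL} and then pass from $E\norm{\nabla J^k}$ to $\sqrt{E\norm{\nabla J^k}^2}=y_k$ via Jensen's inequality. Your extra remark about the uniform-in-$k$ boundedness of $C_{PL}$ and $c$ is a reasonable point of care that the paper handles only by the informal discussion following Lemma~\ref{bg:GDL}, but it does not change the argument.
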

The result anove upper bounds the sub-optimality with the gradient, which follows  Lemma \ref{bg:GDL} and Jensen's inequality. To summarize, we have the following recursions:
\begin{align}\label{eq:acg}
&\textbf{Actor:}\quad a_{k+1} \leq   a_k -c_1\eta_ky _k^2 +c_2\eta_ky_kz_k + c_3\eta^2_k z_k\\\nonumber
&\textbf{Critic:}\quad z^2_{k+1} \leq  z^2_k-c_4\beta_k z^2_k+ c_5\beta_k^2+c_6\eta_k^2 + c_7\eta_k y_kz_k\\
&\textbf{GDL:}\quad a_k \leq c_8y_k.\nonumber
\end{align}
\begin{figure}
    \centering
    \includegraphics[width=\linewidth]{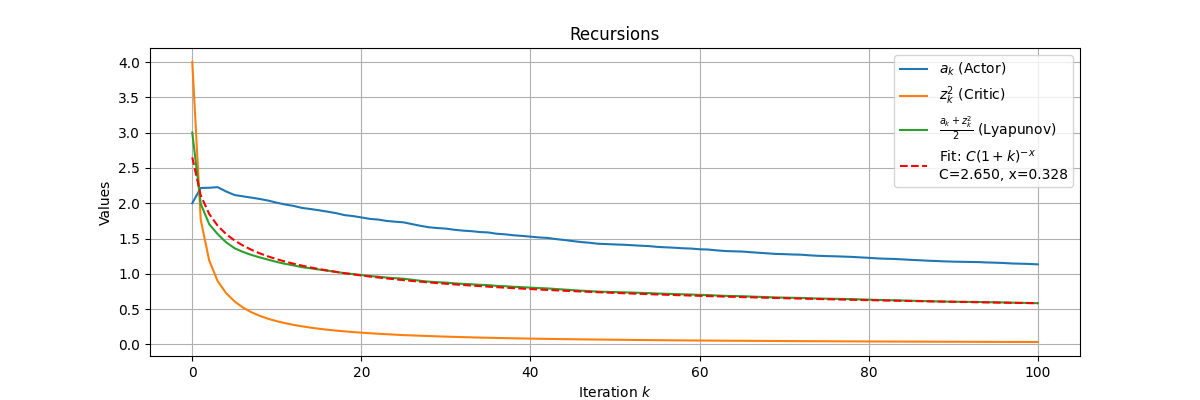}
    \caption{ Actor- Critic recursion in \eqref{eq:acg}: Random  $c_i$, $10\eta_k = \beta_k = (1+k)^{-\frac{2}{3}}, a_0,z_0=2$.}
    \label{fig:acg}
\end{figure}
Solving these interdependent recursions is highly challenging and constitutes the core technical contribution of this paper. It is important to note that the gradient norm \(y_k\) is lower bounded by \(a_k\), allowing us to ensure an upper bound on \(a_{k+1}\) using the lower bound of \(y_k\). However, we lack an upper bound on the gradient norm \(y_k\), which means we cannot upper bound the critic error \(z^2_{k+1}\). In other words, we cannot guarantee that the critic error will decrease at all.  

Observe that \(a_{k+1}\) decreases while \(z^2_{k+1}\) increases with the rise in \(y_k\). A crucial observation is that the Lyapunov term $x_{k+1} := a_{k+1} + z_{k+1}^2$ exhibits a consistent decrease as $y_k$ increases, as shown in Figure \ref{fig:acg}. In contrast, $a_k$ does not demonstrate well-behaved monotonicity (i.e., it is not consistently decreasing). This highlights the stability and utility of the Lyapunov term in characterizing the system's behavior. Now to formally prove this, we combine the actor and critic recursions, assume \(\beta_k = c_\beta \eta_k\), and apply additional algebraic manipulations (detailed in the appendix). This leads to the following recursion:  
\[
a_{k+1} + z^2_{k+1} \leq a_k + z^2_k - c_{12}\eta_k\Big(y_k + z^2_k\Big)^2 + c_{11}\eta_k^2.
\]

Using the Gradient Domination Lemma (GDL), we derive the Lyapunov recursion:  
\[
x_{k+1} \leq x_k - c_{13}\eta_k x_k^2 + c_{11}\eta_k^2,
\]  
which can be solved as stated in the following result.  

\begin{lemma}[ODE Tracking Lemma]Given \(\eta_k = c_{14}(\frac{1}{\frac{1}{x_0^3} + c_{15}k})^{\frac{2}{3}}\),  the recursion \(x_{k+1} \leq x_k - c_{13}\eta_k x_k^2 + c_{11}\eta_k^2\) satisfies the bound:  
\[
x_k \leq \Bigg(\frac{1}{\frac{1}{x_0^3} + c_{15}k}\Bigg)^{\frac{1}{3}},  
\]  
\end{lemma}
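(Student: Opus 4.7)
The plan is to prove the bound by induction on $k$, using $f(k) := (1/x_0^3 + c_{15}k)^{-1/3}$ as a pointwise upper envelope for $x_k$, so that in particular $\eta_k = c_{14}\, f(k)^2$. The envelope is chosen to match the associated continuous-time ODE $\dot{x}(t) = -c_{13}\eta(t)\, x(t)^2 + c_{11}\eta(t)^2$ with the self-referential schedule $\eta(t) = c_{14}\, x_*(t)^2$: the ansatz $x_*(t) = (1/x_0^3 + c_{15}t)^{-1/3}$ obeys $\dot{x}_* = -(c_{15}/3)\, x_*^4$, which coincides with the drift of the ODE at $x = x_*$ exactly when $c_{15} = 3(c_{13}c_{14} - c_{11}c_{14}^2)$. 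This identity pins down $c_{15}$ once $c_{14}$ has been chosen, and the discrete claim then becomes a one-step comparison between the recursion and this explicit trajectory.

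First I would fix $c_{14} \in (0, c_{13}/c_{11})$ so that $\alpha := c_{13}c_{14} - c_{11}c_{14}^2 > 0$, and set $c_{15} := 3\alpha$. The base case $x_0 \leq f(0) = x_0$ is immediate. For the inductive step, introduce $\phi_k(y) := y - c_{13}\eta_k\, y^2 + c_{11}\eta_k^2$, so the hypothesis reads $x_{k+1} \leq \phi_k(x_k)$. The quadratic $\phi_k$ is increasing on the interval $[0,\, 1/(2c_{13}\eta_k)]$, which contains $[0, f(k)]$ once $c_{14}$ is further restricted by $c_{14} \leq 1/(2c_{13}\, x_0^3)$ (equivalently $f(k)^3 \leq 1/(2c_{13}c_{14})$ for every $k$, using $f(k) \leq f(0) = x_0$). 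Under the induction hypothesis $x_k \leq f(k)$, monotonicity then yields $x_{k+1} \leq \phi_k(f(k)) = f(k) - \alpha\, f(k)^4$. It remains to verify $f(k) - f(k+1) \leq \alpha\, f(k)^4$: since $h(t) := t^{-1/3}$ is convex, $h(t) - h(t+c_{15}) \leq -c_{15}\, h'(t) = (c_{15}/3)\, t^{-4/3}$, and evaluating at $t = 1/x_0^3 + c_{15}k$ together with $c_{15} = 3\alpha$ closes the induction.

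The main obstacle is coordinating the two constants: $c_{14}$ must be simultaneously small enough to make $\alpha$ positive \emph{and} to keep $\phi_k$ monotone on the envelope, while $c_{15}$ must then be tuned to exactly match the forward difference of $f$. This coupling is what dictates the non-standard schedule $\eta_k = \Theta(k^{-2/3})$ appearing in the statement, rather than the usual $k^{-1/2}$. Once these relations are in place the remainder is a routine discrete-to-continuous comparison, the only real analytic ingredient being the convexity of $t \mapsto t^{-1/3}$, which converts the forward difference $f(k) - f(k+1)$ into the $(c_{15}/3)\, f(k)^4$ upper bound that exactly neutralises the leading $-\alpha\, f(k)^4$ decrease produced by one step of the recursion.
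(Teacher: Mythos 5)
Your proof is correct and follows essentially the same route as the paper's: induction on $k$, monotonicity of the one-step quadratic map on the relevant interval so that the envelope can be substituted for $x_k$, and then a comparison of the envelope's forward difference against the per-step decrease. The only cosmetic differences are that the paper first rescales variables to normalize the constants and phrases the last step as an integral comparison with the ODE $\frac{d\alpha_x}{dx}=-\frac{1}{2}\alpha_x^4$, whereas you keep the constants $c_{13},c_{14},c_{15}$ explicit and invoke convexity of $t\mapsto t^{-1/3}$ directly — the same argument in different clothing.
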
  

\begin{proof}  
The detailed steps of the proof are provided in the appendix. The key idea in solving the recursion is to establish that $x_k$ lies below the trajectory of the following ODE:  
\[
\frac{du_k}{dk} = -c_{13}\eta_k u_k^2 + c_{11}\eta_k^2.
\]  
We simplify this by appropriately choosing \(\eta_k = c_{14}u_k^2\), leading to the reduced ODE:  $\frac{du_k}{dk} = -c_{15}u_k^4,$  
whose solution is: $
u_k = \Big(\frac{1}{\frac{1}{u_0^3} + c_{15}k}\Big)^{\frac{1}{3}}.  $
\end{proof}  

Using the above result, we conclude that \(a_k = O(k^{-\frac{1}{3}})\) and \(\eta_k, \beta_k = O(k^{-\frac{2}{3}})\), thus completing the convergence analysis. Although, we retrospectively chose the best learning rates $\beta_k,\eta_k = O(k^{-\frac{2}{3}})$ for the presentation simplifications. But we have developed a general framework in the appendix that gives the rates for different possible step-sizes schedules.

One surprising finding is that while the actor error (sub-optimality) in our algorithm decreases as \(O(k^{-\frac{1}{3}})\), which is faster than the \(O(k^{-\frac{1}{4}})\) rate reported in \cite{PG_local_conv_single_time_scale}, the critic error decreases much more slowly. Specifically, our critic error follows \(z_k = O(k^{-\frac{1}{6}})\), compared to the \(O(k^{-\frac{1}{4}})\) rate achieved in \cite{PG_local_conv_single_time_scale}.

\begin{table}[h] 
    \centering
    \begin{tabular}{c|c|c}
    Constant & Definition & Remark\\
    \midrule
    $J^k$&$J^{\pi_{\theta_k}}$& Return at iterate $k$\\\\
    $A^k$&$A^{\pi_{\theta_k}}$& Advantage value at iterate $k$\\\\
    $Q^k$&$Q^{\pi_{\theta_k}}$&Q-value at iterate $k$\\\\
    $d^k$&$d^{\pi_{\theta_k}}$&Occupation measure at iterate $k$\\\\
    $a_k$&$E[J^*-J^k]$&Sub-optimality at iterate $k$\\\\
    $z_k$&$\sqrt{E\norm{Q_k-Q^k}}$&Critic mean squared error at iterate $k$\\\\
    $y_k$&$\sqrt{E\norm{\nabla J^{k}}^2}$& Expected squared norm of the return at iterate $k$\\\\
    $x_k$&$a_{k}+z^2_k$&Lyapunov value at iterate $k$\\\\
    $u_k$&$\Bigm(\frac{1}{\frac{1}{u^3_0 }+c_{15}k}\Bigm)^{\frac{1}{3}}$& Solution to the ODE $\frac{du_k}{dk} = -c_{15}u_k^4$\\\\
$c_i$&Place holder constants for clarity& See appendix\\
   \bottomrule
    \end{tabular}
\label{tb:main:constants}\caption{Definitions of Useful Constants: Iterate $k$ is generated from Algorithm \ref{main:alg:AC} }
   
\end{table}

\section{ Discussion}
We establish the global convergence of actor-critic algorithms with a significantly improved sample complexity of \(O(\epsilon^{-3})\) for obtaining $\epsilon$-close global optimal policy, compared to the existing rate of \(O(\epsilon^{-4})\) derived from $O(\epsilon^{-2})$ complexity for $\epsilon$-close stationary policy \cite{PG_local_conv_single_time_scale}. This brings us closer to the lower bound complexity of $O(\epsilon^{-2})$ for reinforcement learning \cite{UCRL2}. The framework we propose is quite general and could potentially be extended to other settings, such as average reward, function approximation, or Markovian noise. We leave these extensions for future work.

Moreover, this framework for addressing the two-time-scale coupling, combined with our novel and elegant methodology for bounding the recursions, can serve as a foundation for analyzing other two-time-scale algorithms.

\paragraph{Can we improve the complexity further?} Our work proposes a learning rate schedule for both the critic and actor, decaying as \(k^{-\frac{2}{3}}\) with iteration \(k\), which we believe through our investigation, achieves the optimal sample complexity of \(O(\epsilon^{-3})\) that these recursions can possible yield. Consequently, we need to shift our approach in deriving these recursions for improvement in the sample complexity. All prior approaches, including our own, focus on bounding the variance of the critic error \(\sqrt{\mathbb{E}\|Q^k - Q_k\|^2}\). However, for the analysis of the actor's recursion, it suffices to bound the bias \(\|Q^k - \mathbb{E}Q_k\|\). Through careful investigation, we have come to believe that our current analysis, which relies on variance bounds, has reached the sample complexity limit of \(O(\epsilon^{-3})\). In contrast, an analysis based on bias has the potential to achieve further improvements, possibly reducing the complexity to the theoretical lower bound of \(O(\epsilon^{-2})\).

A key insight lies in the fundamental difference between variance and bias: even for a fixed policy, variance remains non-zero, whereas bias vanishes. Specifically, current variance-based approaches necessitate diminishing learning rates for both the actor and the critic to ensure decreasing variance. In contrast, the bias term can tend to zero even with a constant critic learning rate, requiring only a diminishing learning rate for the actor. This observation suggests that focusing on bias may be a more promising direction, but it also presents significant analytical challenges that remain unexplored.

In summary, we hypothesize that the current sample complexity of \(O(\epsilon^{-3})\) could be improved to \(O(\epsilon^{-2})\) by focusing on bias rather than variance. This shift in focus may allow for a constant (or very slowly decaying) critic step size, only requiring diminishing actor step size. Further, we believe our new methodology for solving recursions may play crucial role in unlocking these new research directions and opportunities.

% \begin{lemma}\label{main:rs:Actor"convergence}Bias term $ E\norm{B_k}_\infty \leq c_1\eta_k$  and for  step size $\eta_k  =\Bigm(\frac{1}{1 + 3c_6^3k}\Bigm)^{\frac{2}{3}}$,  we have the following convergence
% \[J^*-J_k \leq  \frac{J^*-J_0}{\Bigm(1 + 3c_6^3k\Bigm)^{\frac{1}{3}}},\qquad \forall k\geq 0 ,\]
% where $c_6 = \frac{\frac{c^2}{2SC^2_{PL}}(J^*-J_0))^2+\frac{2c_1}{(1-\gamma^2)}+\frac{Lm^2}{2}+\frac{Lc_1}{1-\gamma}}{J^*-J_0}$.
% \end{lemma}

\bibliography{main}
\bibliographystyle{apalike} 
\newpage

\appendix

\section{Supporting Results}
\subsection{Sufficient Exploration}
\begin{lemma}\label{app:rs:exploration} Under the Assumption \ref{main:ass:exploration}, the update rule \eqref{eq:Qeval:fixedPolicy}, converges as 
\[\norm{\mathbb{E}Q_k - Q^\pi}_2 \to  \alpha^k\norm{\mathbb{E}Q_0 - Q^\pi}_2,\]
where $\alpha = \sqrt{1 -\frac{\lambda^2}{2}}\) taking $\beta_k =\frac{\lambda}{2}$.
\end{lemma}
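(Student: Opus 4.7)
The plan is to take the expectation of the update rule, reduce it to a deterministic linear recursion on the expected bias $e_m := Q^\pi - \mathbb{E}Q_m$, and then extract a geometric contraction directly from Assumption \ref{main:ass:exploration}.

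First, I would fix an arbitrary coordinate $(\bar s,\bar a)$ and write out the coordinate-wise update. Only the sampled coordinate is modified, so the change at $(\bar s,\bar a)$ equals $\beta_k \mathbf{1}\{(s,a)=(\bar s,\bar a)\}$ times the TD error. Taking the conditional expectation, the indicator contributes the factor $d^\pi(\bar s)\pi(\bar a|\bar s)$, which is exactly the $(\bar s,\bar a)$ diagonal entry of $D^\pi$, and the conditional expectation of $Q_m(s',a')$ given $(s,a)=(\bar s,\bar a)$ equals $(P_\pi Q_m)(\bar s,\bar a)$. Using independence of the fresh sample from $Q_m$, and invoking the Bellman identity $R = (I - \gamma P_\pi)Q^\pi$, assembles the clean matrix recursion
\[
Q^\pi - \mathbb{E}Q_{m+1} = \bigl[I - \beta_k D^\pi(I - \gamma P_\pi)\bigr]\bigl(Q^\pi - \mathbb{E}Q_m\bigr).
\]

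Next, I would expand the squared bias as
\[
\norm{e_{m+1}}_2^2 = \norm{e_m}_2^2 - 2\beta_k \langle e_m,\, D^\pi(I - \gamma P_\pi)\,e_m \rangle + \beta_k^2 \norm{D^\pi(I - \gamma P_\pi)\,e_m}_2^2.
\]
Assumption \ref{main:ass:exploration} lower-bounds the cross term by $2\beta_k \lambda \norm{e_m}_2^2$. For the quadratic term I would bound $\norm{D^\pi(I - \gamma P_\pi)}_2 \leq \sqrt{2}$ using that $D^\pi$ is diagonal with entries in $[0,1]$ and that each row of $I - \gamma P_\pi$ has absolute sum at most $1 + \gamma \leq 2$. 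Substituting $\beta_k = \lambda/2$ then collapses the bound to
\[
\norm{e_{m+1}}_2^2 \leq \bigl(1 - \tfrac{\lambda^2}{2}\bigr) \norm{e_m}_2^2,
\]
and iterating across $m$ yields the stated rate with $\alpha = \sqrt{1 - \lambda^2/2}$.

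The main obstacle I anticipate is tightening the spectral-norm bound on $D^\pi(I - \gamma P_\pi)$ enough to land exactly on the constant $\sqrt{1 - \lambda^2/2}$, since this matrix is not symmetric and the $\ell_2$ operator norm does not reduce to row/column sums in general. A safe fallback is to establish $\norm{D^\pi(I - \gamma P_\pi)}_2 \leq C$ for some explicit constant $C$ via $\norm{M}_2 \leq \sqrt{\norm{M}_1 \norm{M}_\infty}$ and then pick $\beta_k = \lambda/C^2$, producing the slightly weaker but still geometric contraction $\alpha = \sqrt{1 - \lambda^2/C^2}$, which suffices for every downstream application in the paper.
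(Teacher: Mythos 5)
Your proposal is correct and follows essentially the same route as the paper: pass to expectations to get the affine recursion $Q^\pi - \mathbb{E}Q_{m+1} = [I-\beta_k D^\pi(I-\gamma P_\pi)](Q^\pi-\mathbb{E}Q_m)$, expand the squared norm, lower-bound the cross term via Assumption \ref{main:ass:exploration}, bound $\norm{D^\pi(I-\gamma P_\pi)}_2$ by a constant (the paper's Proposition \ref{app:rs:cgamma} gives $1+\gamma$), and pick $\beta\propto\lambda$. Your caution about the operator-norm constant is warranted — the paper's own bound $(1+\gamma)^2\le 4$ does not quite justify the coefficient $2\beta^2$ in its display, so your fallback $\alpha=\sqrt{1-\lambda^2/C^2}$ with an explicit $C$ is the honest version of the same argument.
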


\begin{proof} From Proposition \ref{app:rs:QevalFixPol}, we have $\norm{EQ_{k+1}-Q^\pi} \leq \alpha \norm{EQ_{k+1}-Q^\pi}$, from which  the result follows.
\end{proof}
We define $P_\pi((s',a'),(s,a)) = P(s'|s,a)\pi(a'|s')$ and $D^\pi((s',a'),(s,a)) = \mathbf{1}\bigm((s',a')=(s,a)\bigm)(1-\gamma)\sum_{n=0}^{\infty}\gamma^n \mu^T(P^\pi)^n(s)$.
\begin{proposition}\label{app:rs:cgamma}
    $c_\gamma = \max_{\pi,Q}\frac{\norm{D^\pi (I-\gamma P_\pi)Q}}{\norm{Q}} \leq 1+\gamma.$
\end{proposition}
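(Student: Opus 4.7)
The plan is to establish the operator norm bound $\norm{D^\pi(I - \gamma P_\pi)}_2 \le 1 + \gamma$ uniformly in $\pi$, which immediately yields $c_\gamma \le 1 + \gamma$. The triangle inequality reduces this to showing $\norm{D^\pi}_2 \leq 1$ and $\norm{D^\pi P_\pi}_2 \leq 1$, since
\[
\norm{D^\pi(I - \gamma P_\pi) Q}_2 \leq \norm{D^\pi Q}_2 + \gamma \norm{D^\pi P_\pi Q}_2 \leq \bigl(\norm{D^\pi}_2 + \gamma \norm{D^\pi P_\pi}_2\bigr)\norm{Q}_2.
\]

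Both norm bounds will follow from a single elementary observation: for any non-negative matrix $M$ whose entries lie in $[0,1]$ and whose total mass $\sum_{ij} M_{ij}$ is at most $1$, one has $\norm{M}_2 \leq \norm{M}_F$ and $\norm{M}_F^2 = \sum_{ij} M_{ij}^2 \leq \sum_{ij} M_{ij} \leq 1$, using $M_{ij}^2 \leq M_{ij}$ for $M_{ij} \in [0,1]$. The task therefore reduces to verifying that $D^\pi$ and $D^\pi P_\pi$ are each non-negative with entries in $[0,1]$ summing to at most $1$.

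For $D^\pi$ this is immediate: using the definition from the main text, the nonzero (diagonal) entries are $d^\pi(s)\pi(a|s)$, which form the state-action occupation distribution on $\St\times\A$ and therefore sum to $1$. For $D^\pi P_\pi$, the $((s,a),(s',a'))$ entry works out to $d^\pi(s)\pi(a|s) P(s'|s,a)\pi(a'|s')$, which is precisely the joint probability that $(s,a)$ is drawn from the occupation measure and $(s',a')$ is the one-step successor state-action pair under $\pi$. This is a bona fide joint probability distribution on $(\St\times\A)^2$, so its entries are in $[0,1]$ and sum to $1$. Combining the two bounds gives the claim.

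The main subtlety is recognizing the right object to normalize. It is tempting to bound $\norm{P_\pi}_2$ directly, but this fails: $P_\pi$ is only row-stochastic on an $|\St||\A|$-dimensional space, and its $\ell_2$ operator norm can be as large as $\sqrt{|\St||\A|}$ in the worst case (e.g.\ when all rows collapse to the same coordinate vector). The useful structural fact — and the essence of the proof — is that the \emph{product} $D^\pi P_\pi$ is a genuine joint probability table on $(\St\times\A)^2$, at which point the Frobenius-norm bound of $1$ becomes available and the operator norm bound follows without any spectral or contractive analysis of $P_\pi$ itself.
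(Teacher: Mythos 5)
Your proof is correct and follows essentially the same route as the paper's: triangle inequality to split off the $\gamma P_\pi$ term, then bound each piece by exploiting that the relevant entries form a probability distribution so their squares sum to at most $1$. The paper carries this out via an explicit row-wise Cauchy--Schwarz on $D^\pi P_\pi Q$ (using $d(s,a)^2\norm{P_\pi(\cdot|(s,a))}_2^2 \le d(s,a)\norm{P_\pi(\cdot|(s,a))}_1^2$), which is just a reorganization of your Frobenius-norm bound on the joint probability table; your packaging is arguably the cleaner statement of the same computation.
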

\begin{proof}
\begin{align}
\norm{D^\pi (I-\gamma P_\pi)Q} &\leq\norm{D^\pi Q}+\gamma\norm{D^\pi P_\pi Q}\\ 
&\leq\norm{Q}+\gamma\norm{D^\pi P_\pi Q},\qquad\text{(as $\sum_{s,a}\abs{D((s,a),(s,a))}=1$)}\\
&=\norm{Q}+\gamma \sqrt{\sum_{s,a}\bigm(d(s,a)\innorm{P_\pi(\cdot|(s,a)),Q}}\bigm)^2,\qquad\\
&\leq\norm{Q}+\gamma \sqrt{\sum_{s,a}\bigm(d(s,a)\norm{P_\pi(\cdot|(s,a))}\norm{Q}}\bigm)^2,\qquad\\
&\leq\norm{Q}+\gamma \norm{Q}\sqrt{\sum_{s,a}\bigm(d(s,a)\bigm)^2\norm{P_\pi(\cdot|(s,a))}^2},\qquad\\
&\leq\norm{Q}+\gamma \norm{Q}\sqrt{\sum_{s,a}d(s,a)\norm{P_\pi(\cdot|(s,a))}^2_1},\qquad\\
&=(1+\gamma)\norm{Q}.
\end{align}
\end{proof}

\begin{proposition}\label{app:rs:QevalFixPol} For any policy $\pi$, given
$T^\pi_\beta Q = Q+ \beta D^{\pi}\Bigm[R+\gamma P_{\pi} Q-Q \Bigm]$,
we have
\[ \norm{Q^\pi- T^\pi_\beta Q}\leq 
    \sqrt{1-\frac{\lambda^2}{2}}\norm{Q^\pi-Q}_2.\]

\end{proposition}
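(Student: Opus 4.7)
The plan is to exploit the Bellman fixed-point property to identify $Q^\pi$ as a fixed point of $T^\pi_\beta$ and then bound the contraction factor via the exploration assumption. Specifically, the Bellman equation $Q^\pi = R + \gamma P_\pi Q^\pi$ forces the bracketed quantity $R + \gamma P_\pi Q^\pi - Q^\pi$ to vanish, so $T^\pi_\beta Q^\pi = Q^\pi$. Subtracting, and substituting $R = Q^\pi - \gamma P_\pi Q^\pi$ into the definition of $T^\pi_\beta$, one gets $R + \gamma P_\pi Q - Q = (I - \gamma P_\pi)(Q^\pi - Q)$. Writing $e := Q^\pi - Q$, this yields the clean identity
\[
Q^\pi - T^\pi_\beta Q \;=\; e \;-\; \beta\, D^\pi (I - \gamma P_\pi)\, e.
\]

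Next I would square the Euclidean norm and expand:
\[
\norm{Q^\pi - T^\pi_\beta Q}_2^2 \;=\; \norm{e}_2^2 \;-\; 2\beta\,\innorm{e,\, D^\pi(I-\gamma P_\pi)e} \;+\; \beta^2\norm{D^\pi(I-\gamma P_\pi)e}_2^2.
\]
The cross term is lower-bounded by $\lambda\norm{e}_2^2$ directly from Assumption \ref{main:ass:exploration}, while the quadratic term is upper-bounded by $c_\gamma^2\norm{e}_2^2$ with $c_\gamma \leq 1+\gamma$ from Proposition \ref{app:rs:cgamma}. This gives
\[
\norm{Q^\pi - T^\pi_\beta Q}_2^2 \;\leq\; \br*{1 - 2\beta\lambda + \beta^2 c_\gamma^2}\norm{e}_2^2,
\]
and the remaining task is to choose $\beta$ so that the resulting factor is bounded by $1-\lambda^2/2$, matching the statement (consistent with the choice $\beta = \lambda/2$ announced in Lemma \ref{app:rs:exploration}).

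The main obstacle is precisely this last step of tuning the constants: the naive choice $\beta = \lambda/2$ combined with the bound $c_\gamma \leq 1+\gamma$ gives a contraction factor of the form $1 - \lambda^2 + \lambda^2(1+\gamma)^2/4$, which does not reduce to $1-\lambda^2/2$ in general. Resolving this will require either a sharper quadratic bound on $\norm{D^\pi(I-\gamma P_\pi)e}_2$ that exploits the structure of $D^\pi$ (e.g.\ that $D^\pi$ is a diagonal probability matrix so it acts as a contraction, and $(I-\gamma P_\pi)$ behaves like a Bellman residual operator on eigenvectors) or an optimization over $\beta$ that picks $\beta = \lambda/c_\gamma^2$ and yields a factor of the form $1 - \lambda^2/c_\gamma^2$. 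I expect the cleanest route is to optimize $\beta$ in the quadratic $1 - 2\beta\lambda + \beta^2 c_\gamma^2$, noting that the minimum equals $1 - \lambda^2/c_\gamma^2$; reconciling this with the stated $1-\lambda^2/2$ presumably relies on an additional normalization (e.g.\ bounding $c_\gamma^2 \leq 2$ in the regime of interest, or absorbing $\gamma$-dependent factors into $\lambda$). Once that constant is nailed down, taking square roots yields the claimed bound.
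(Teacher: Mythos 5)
Your decomposition, expansion of the squared norm, use of Assumption \ref{main:ass:exploration} on the cross term, and appeal to Proposition \ref{app:rs:cgamma} on the quadratic term are exactly the steps in the paper's proof; the approach is the same. The obstacle you flag at the end is not a defect of your argument but a genuine constant slip in the paper itself: the paper passes from $\beta^2\norm{U}^2$ to $2\beta^2\norm{Q^\pi-Q}^2$ citing Proposition \ref{app:rs:cgamma}, but that proposition only yields $\norm{U}^2\le(1+\gamma)^2\norm{Q^\pi-Q}^2$, and $(1+\gamma)^2$ can be as large as $4$, so the claimed factor $1+2\beta^2-2\beta\lambda$ (and hence the stated $1-\lambda^2/2$ with $\beta=\lambda/2$) is only justified when $(1+\gamma)^2\le 2$, i.e.\ $\gamma\le\sqrt{2}-1$. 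Your proposed fix of optimizing the quadratic $1-2\beta\lambda+\beta^2 c_\gamma^2$ at $\beta=\lambda/c_\gamma^2$ is the right repair: it gives a contraction factor $1-\lambda^2/c_\gamma^2\le 1-\lambda^2/4$, which is what the proposition should state (the downstream use in Lemma \ref{app:rs:exploration} only needs some $\alpha<1$, so nothing else in the paper breaks). In short: your proof is the paper's proof, and the constant you could not reconcile cannot be reconciled as stated.
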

\begin{proof}
\begin{align}
    U &:=D^\pi \Bigm[R - (I-\gamma P_\pi)Q\Bigm]\\
    &=D^\pi \Bigm[Q^\pi-\gamma P_\pi Q^\pi - (I-\gamma P_\pi)Q\Bigm ],\qquad\text{(using $ Q^\pi = R+\gamma P_\pi  Q^\pi$)}\\
    &=D^\pi \bigm(I-\gamma P_\pi\bigm)\bigm(Q^\pi -Q\bigm )
\end{align}

Lets look at 
\begin{align*}
   &\quad \norm{Q^\pi-T^\pi_\beta Q}^2 = 
    \norm{Q^\pi-Q-\beta U}^2,\qquad\text{(definition of $T^\pi_\beta Q = Q + \beta U$)} \\&= 
    \norm{Q^\pi-Q}^2+\beta^2\norm{U}^2-2\beta \innorm{Q^\pi-Q,U}\\&\leq 
    \norm{Q^\pi-Q}^2+\beta^2\norm{U}^2-2\beta\lambda \norm{Q^\pi-Q}^2,\qquad\text{(from Assumption \ref{main:ass:exploration})}\\ &\leq 
    (1+2\beta^2-2\beta\lambda)\norm{Q^\pi-Q}_2^2,\qquad\text{(from Proposition \ref{app:rs:cgamma})}\\&\leq 
    (1-\frac{\lambda^2}{2})\norm{Q^\pi-Q}_2^2,\qquad\text{(taking $\beta = \frac{\lambda}{2}$)}.
\end{align*}
\end{proof}

\subsection{Local to Global}
\begin{proposition} \label{app:bg:local2global}If $E\norm{\nabla J^{k}}^2_2 \leq O(k^{-\frac{1}{2}})$ then $J^*-EJ^{\pi_k}\leq O(k^{-\frac{1}{4}})$. 
\end{proposition}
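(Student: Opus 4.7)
The plan is a very short chain: invoke the Gradient Domination Lemma to convert gradient information into sub-optimality information, then use Jensen's inequality on the concave square root to pass from the expected norm to the square root of the expected squared norm.

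More concretely, I would first apply Lemma \ref{bg:GDL} pointwise along the stochastic trajectory. Rearranging the stated inequality gives a deterministic bound
\[
J^* - J^{\pi_{\theta_k}} \;\leq\; C\,\norm{\nabla J^{\pi_{\theta_k}}}_2,
\]
where $C = \frac{\sqrt{S}\,C_{PL}}{c}$ is a finite problem-dependent constant (finite under the standing assumptions on $\mu$ and $\theta_0$ discussed right after Lemma \ref{bg:GDL}). Taking expectations preserves the inequality, yielding $a_k = E[J^* - J^{\pi_{\theta_k}}] \leq C\,E\norm{\nabla J^{\pi_{\theta_k}}}_2$.

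Next, I would bridge the $L^1$ and $L^2$ norms of $\norm{\nabla J^{\pi_{\theta_k}}}_2$ via Jensen's inequality applied to the concave map $t \mapsto \sqrt{t}$:
\[
E\norm{\nabla J^{\pi_{\theta_k}}}_2 \;=\; E\!\sqrt{\norm{\nabla J^{\pi_{\theta_k}}}_2^2} \;\leq\; \sqrt{E\norm{\nabla J^{\pi_{\theta_k}}}_2^2}.
\]
Plugging in the assumed local rate $E\norm{\nabla J^{\pi_{\theta_k}}}_2^2 \leq O(k^{-1/2})$ gives $E\norm{\nabla J^{\pi_{\theta_k}}}_2 \leq O(k^{-1/4})$, and composing with the Gradient Domination step produces $J^* - EJ^{\pi_{\theta_k}} \leq C\cdot O(k^{-1/4}) = O(k^{-1/4})$, which is the claim.

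There is no real obstacle here; the only care needed is ensuring the Gradient Domination constant $C = \sqrt{S}\,C_{PL}/c$ is finite and uniform in $k$, which is precisely what the remarks following Lemma \ref{bg:GDL} guarantee (bounded mismatch coefficient under $\mu \succ 0$, and $c$ bounded away from zero under a bounded parameter initialization and iterates). This explains why reducing the \emph{squared} gradient to $\epsilon$ only certifies an $O(\sqrt{\epsilon})$-optimal policy, which is the source of the $O(\epsilon^{-4})$ global complexity in the baseline $O(\epsilon^{-2})$ stationary-policy guarantees recorded in Table~\ref{tab:relatedWork}.
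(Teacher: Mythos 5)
Your proposal is correct and follows essentially the same route as the paper's proof: both rest on the Gradient Domination Lemma combined with a single application of Jensen's inequality, the only cosmetic difference being that you apply Jensen to the concave square root of the gradient norm ($E\norm{\nabla J^k}_2 \leq \sqrt{E\norm{\nabla J^k}_2^2}$) while the paper applies it to the convex square of the sub-optimality ($(E[J^*-J^k])^2 \leq E[(J^*-J^k)^2]$). Your added remarks on the finiteness of $C = \sqrt{S}\,C_{PL}/c$ are consistent with the paper's standing assumptions and do not change the argument.
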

\begin{proof}
From Gradient Domination Lemma \ref{bg:GDL} and Jensen's inequality, we have
    \begin{align*}
        E\norm{\nabla J^{k}}^2_2 \geq E\Bigm[J^*-J^{k}\Bigm]\geq \frac{c^2}{SC^2_{PL}}\Bigm[J^*-EJ^{k}\Bigm]^2.
    \end{align*}
Hence if $E\norm{\nabla J^{\pi_k}}^2_2 \leq O(k^{-\frac{1}{2}})$ then $\Bigm[J^*-EJ^{\pi_k}\Bigm]^2\leq O(k^{-\frac{1}{2}})$,  implying $J^*-EJ^{k}\leq O(k^{-\frac{1}{4}})$. 
\end{proof}

\section{Deriving Recursions}\label{app:sec:DerRec}

\paragraph{Notations.}
Recall that  $J^k = J^{\pi_{\theta_k}}, A^k = A^{\pi_{\theta_k}}, Q^k = Q^{\pi_{\theta_k}}, d^k = d^{\pi_{\theta_k}}, a_k = E[J^*-J^k], y_k = \sqrt{E\norm{\nabla J^k }^2}, z_k = \sqrt{E\norm{Q_k-Q^k }^2}$ are used as shorthands. Further $Q_k, A_k$ are iterates from Algorithm \ref{main:alg:AC}, and $ \mathbf{1}_k \in \{0,1\}^{\St\times\A} $ is indicator for $(s_k,a_k)$ in the Algorithm \ref{main:alg:AC}. We refer Hadamard product  by $\odot$, defined as $(a\odot b)(i) = a(i)b(i)$.

\begin{table}[h]
    \centering
    \begin{tabular}{c|c|c}
    Constant & Definition & Remark\\
    \midrule
    $\lambda$&&Sufficient Exploration constant\\\\
    $L$ & $\frac{8}{(1-\gamma)^3}$& Smoothness constant\\\\
    $c_g$ & $\frac{\sqrt{S}C_{PL}}{c}$ & GDL constant\\\\
        $L^\pi_1=2$ & $\norm{\pi_{\theta_{k+1}}-\pi_{\theta_k}} \leq L^\pi_1\norm{\theta_{k+1}-\theta_k}$& Lipschitz constant of policy w.r.t. $\theta$ \\\\
       $c_q = \frac{2SALL^\pi_1}{(1-\gamma)^2}$  &  $\norm{Q^k-Q^{k+1}}\leq c_q\eta_k$ & Lipschitz constant \\\\
       $c_u=1+ \frac{2}{1-\gamma}$&$\abs{U_k}\leq c_u$&\\\\
   $L^q_2$&$\norm{Q^k-Q^{k+1}+\nabla Q^k(\theta_{k+1}-\theta_k)} \leq \frac{1}{2}L^q_2\norm{\theta_{k+1}-\theta_k}^2$& smoothness of $Q$\\\\
   $c_z =\frac{2SA}{1-\gamma} $& $\norm{Q_k-Q^k}\leq c_z$& Upper bound on $z_k$\\\\
   $c_\beta=\frac{\beta_k}{\eta_k} $ &$\frac{1-\gamma}{2\lambda}\Bigm(\frac{2\gamma\sqrt{S}A}{(1-\gamma)^3}+\frac{2}{(1-\gamma)}\Bigm)^2$& Actor-critic scale ratio\\\\
   $c_\eta$& $2c_u^2c^2_\beta+\frac{4L}{(1-\gamma)^4}+2c_q^2+\frac{2L^q_2c_z}{(1-\gamma)^4}$&\\\\
   $c_l$&$4\min\{\frac{a^2_k}{c_g^2(1-\gamma)}, \frac{2\lambda c_\beta}{c^2_z}\}$& ODE constant\\
   \bottomrule
    \end{tabular}
    \caption{Constants}
    \label{tb:constants}
\end{table}

In this section, we derive the following recursions:
\begin{align*}
    a_{k+1}&\leq a_k - \frac{\eta_k}{1-\gamma}y^2_k +\frac{2\eta_k}{1-\gamma}y_kz_k+ \frac{4L\eta^2_k}{(1-\gamma)^4}\\
    a_k &\leq c_gy_k\\
    z^2_{k+1} &\leq (1-2\lambda\beta_k)z^2_k+ 2c_u^2\beta^2_k+2c_q^2\eta_k^2 + \frac{2L^q_2}{(1-\gamma)^4}\eta_k^2z_k+\frac{2\gamma \sqrt{S}A}{(1-\gamma)^3}\eta_ky_kz_k, 
\end{align*}
where the constants are described in Table \ref{tb:constants}.
\subsection{Actor Recursion}
\begin{lemma}[Sufficient Increase Lemma]\label{app:rs:SufficientIncreaseLemma}
Let $\theta_k$ be the iterate obtained Algorithm \ref{main:alg:AC}. Then,
\begin{align*}
E[ J^{k+1} -J^k ]\geq  \frac{\eta_k}{1-\gamma}E\Bigm[\norm{\nabla J^k }^2 +\innorm{\nabla J^k ,d^k\odot (A_{k}-A^k )}- \frac{2L\eta_k}{(1-\gamma)^3}\Bigm].
\end{align*}
\end{lemma}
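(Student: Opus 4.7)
The plan is to apply the standard $L$-smoothness of $J^{\pi_\theta}$ with respect to $\theta$ (with $L=8/(1-\gamma)^3$, from \cite{PG_softmax_ConvRates}),
\[
J^{k+1} \geq J^k + \langle \nabla J^k, \theta_{k+1}-\theta_k\rangle - \frac{L}{2}\norm{\theta_{k+1}-\theta_k}^2,
\]
and then take conditional expectation given $(\theta_k,Q_k)$, so the only remaining randomness is the single sample $(s_k,a_k)\sim d^k\otimes \pi_{\theta_k}$. Because Algorithm \ref{main:alg:AC} touches only the sampled coordinate, the actor step can be written compactly as $\theta_{k+1}-\theta_k=\eta_k(1-\gamma)^{-1}A_k(s_k,a_k)\mathbf{1}_k$, where $A_k$ is the sample advantage built from the current critic $Q_k$. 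This reduces the derivation to controlling a first-order drift term and a second-order variance term under this conditional law.

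For the drift, taking conditional expectation and using $E[\mathbf{1}_k(s,a)]=d^k(s)\pi_{\theta_k}(a|s)$ yields
\[
E\langle \nabla J^k,\theta_{k+1}-\theta_k\rangle = \frac{\eta_k}{1-\gamma}\sum_{s,a} d^k(s)\pi_{\theta_k}(a|s)\nabla J^k(s,a)\,A_k(s,a).
\]
The key manipulation is the split $A_k=A^k+(A_k-A^k)$ combined with the closed-form policy-gradient identity $\nabla J^k(s,a)=(1-\gamma)^{-1}d^k(s)\pi_{\theta_k}(a|s)A^k(s,a)$ applied on the $A^k$ half: the identity implies $d^k(s)\pi_{\theta_k}(a|s)A^k(s,a)=(1-\gamma)\nabla J^k(s,a)$, collapsing the $A^k$ sum into $\norm{\nabla J^k}^2$ (up to the $(1-\gamma)$ factor), while the residual $(A_k-A^k)$ produces exactly the bias inner product $\langle \nabla J^k, d^k\odot(A_k-A^k)\rangle$ appearing in the statement (interpreting $d^k\odot$ as broadcasting against the joint occupation $d^k(s)\pi_{\theta_k}(a|s)$).

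For the variance term, since only the coordinate $(s_k,a_k)$ is modified, $\norm{\theta_{k+1}-\theta_k}^2 = \eta_k^2(1-\gamma)^{-2}A_k(s_k,a_k)^2$ pointwise. A uniform bound $\abs{A_k(s,a)}\leq c_u=O(1/(1-\gamma))$ (Table \ref{tb:constants}) then yields an $O(\eta_k^2/(1-\gamma)^4)$ contribution, which, after multiplication by $L/2$ and factoring out the common $\eta_k/(1-\gamma)$ prefactor, reproduces the $-2L\eta_k/(1-\gamma)^3$ term inside the bracket of the lemma. Assembling the three bounds (smoothness, drift, variance) gives the stated inequality.

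The main obstacle I foresee is justifying the uniform bound $\abs{A_k}\leq c_u$ across iterations: the critic iterate $Q_k$ is not a priori bounded from the TD-style update in line 4, so a separate invariance argument (bounded rewards $\norm{R}_\infty\leq 1$ together with the contractive structure exploited in Proposition \ref{app:rs:QevalFixPol}) is needed to cap $\norm{Q_k}_\infty$ and hence the sample advantage $A_k$. A lighter but still necessary ingredient is aligning the symbol $d^k\odot$ with the joint state-action occupation $d^k(s)\pi_{\theta_k}(a|s)$ so that the split above literally produces the inner product written in the statement.
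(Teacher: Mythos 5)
Your proposal follows essentially the same route as the paper's proof: $L$-smoothness, substitution of the single-coordinate actor update $\theta_{k+1}-\theta_k=\eta_k(1-\gamma)^{-1}A_k(s_k,a_k)\mathbf{1}_k$, expectation over $(s_k,a_k)\sim d^k$ to convert the indicator into the occupation measure, the split $A_k=A^k+(A_k-A^k)$ combined with the policy-gradient identity, and a uniform bound on $A_k$ for the quadratic term. The two caveats you flag (the a priori boundedness of $Q_k$ under the TD update, and the $(1-\gamma)$ normalization hidden in the shorthand $\nabla J^k = d^k\odot A^k$) are legitimate but are equally glossed over in the paper's own proof, which simply asserts $\norm{A_k}_\infty\leq \frac{2}{1-\gamma}$.
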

\begin{proof}
From the smoothness of the  return, we have \begin{align*}
 &E\bigm[J^{k+1}  -  J^k \bigm] \geq E\Bigm[\innorm{ \nabla  J^k ,\theta_{k+1}-\theta_k} - \frac{L}{2}\norm{\theta_{k+1}-\theta_k}^2\Bigm],\\&\geq E\Bigm[ \frac{\eta_k}{1-\gamma}\innorm{\nabla  J^k ,A_{k}\odot\mathbf{1}_k} - \frac{L\eta_k^2}{2(1-\gamma)^2}A^2_{k}\mathbf{1}_k\Bigm],\qquad\text{(from update rule in Algorithm \ref{main:alg:AC}}\\&\geq  \frac{\eta_k}{1-\gamma} E\Bigm[\innorm{\nabla  J^k ,d^k\odot  A_{k}} - \frac{2L\eta_k}{(1-\gamma)^3}\Bigm],\qquad\text{( as $(s_k,a_k) \sim d^k\odot  $ and $\norm{A_k}_\infty \leq \frac{2}{1-\gamma}$)}\\
&\geq  \frac{\eta_k}{1-\gamma}E\Bigm[\norm{\nabla J^k }^2_2 +\innorm{\nabla  J^k ,d^k\odot  (A_{k}-A^k )}- \frac{2L\eta_k}{(1-\gamma)^3}\Bigm],\qquad\text{( as $\nabla J^k  = d^k\odot  A^k $)}.
\end{align*} 
\end{proof}

\begin{proposition} \label{app:rs:varBound}We have
    \[E\Bigm\lvert\innorm{\nabla  J^k ,d^k\odot  (A_{k}-A^k )} \Bigm\rvert \leq  2\sqrt{E\norm{\nabla J^k}^2}   \sqrt{E\norm{Q_{k}-Q^k }^2}.\]
\end{proposition}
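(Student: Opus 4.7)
The plan is to chain two applications of Cauchy--Schwarz around a triangle-inequality decomposition of the advantage difference. First, I would peel off the gradient by Cauchy--Schwarz in the inner product; since $d^k$ is a probability distribution with $d^k(s)\le 1$, the Hadamard factor only shrinks coordinates, so
\[
\Big\lvert\langle \nabla J^k,\, d^k \odot (A_k - A^k)\rangle\Big\rvert \;\le\; \|\nabla J^k\|\cdot\|A_k - A^k\|.
\]

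Next, I would bound $\|A_k - A^k\|$ by $2\|Q_k - Q^k\|$. Writing $A = Q - v$ with $v(s)=\sum_a \pi_{\theta_k}(a|s)Q(s,a)$, pointwise
\[
(A_k - A^k)(s,a) \;=\; (Q_k - Q^k)(s,a) \;-\; \sum_{a'}\pi_{\theta_k}(a'|s)(Q_k - Q^k)(s,a').
\]
The second term is a $\pi_{\theta_k}$-average of $Q_k - Q^k$, so Jensen's inequality bounds its norm (over states, extended constantly in $a$) by that of $Q_k - Q^k$; combined with the triangle inequality this yields the factor of two. Finally, I would take expectations over the sample randomness and close with Cauchy--Schwarz in the probability space:
\[
E\big[\|\nabla J^k\|\cdot\|Q_k-Q^k\|\big]\;\le\;\sqrt{E\|\nabla J^k\|^2}\,\sqrt{E\|Q_k-Q^k\|^2}.
\]
Chaining the three inequalities gives the claim.

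The main obstacle is keeping the norm conventions consistent across the three steps, since the Hadamard-weighted inner product in step one, the triangle inequality on $A_k - A^k$ in step two, and the outer Cauchy--Schwarz over the probability space in step three each implicitly invoke a norm; aligning them with the definitions of $y_k$ and $z_k$ is where one must be careful. Once those conventions match, no deeper structural property of the advantage function or the softmax parameterization is needed---the proposition really only exploits the linearity of the estimation error in $Q_k - Q^k$ together with the fact that $d^k$ and $\pi_{\theta_k}$ are probability measures.
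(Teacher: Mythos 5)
Your overall architecture matches the paper's: Cauchy--Schwarz on the inner product, a triangle-inequality bound on $A_k - A^k$ in terms of $Q_k - Q^k$, and a closing Cauchy--Schwarz in the probability space. But your second step has a genuine gap. You claim $\norm{A_k - A^k}_2 \le 2\norm{Q_k - Q^k}_2$ via Jensen, arguing that the $\pi_{\theta_k}$-average term has norm bounded by that of $Q_k - Q^k$. The problem sits exactly in your parenthetical ``extended constantly in $a$'': that average does not depend on $a$, so when you measure it in the Euclidean norm over $\St\times\A$ it is counted $\abs{\A}$ times, and
\[
\sum_{s,a}\Bigl(\sum_{a'}\pi_{\theta_k}(a'|s)(Q_k-Q^k)(s,a')\Bigr)^2
= \abs{\A}\sum_{s}\Bigl(\sum_{a'}\pi_{\theta_k}(a'|s)(Q_k-Q^k)(s,a')\Bigr)^2
\le \abs{\A}\,\norm{Q_k-Q^k}_2^2 .
\]
The triangle inequality therefore yields $\norm{A_k - A^k}_2 \le (1+\sqrt{\abs{\A}})\norm{Q_k - Q^k}_2$, not the factor $2$ claimed in the statement. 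Your first step is what commits you to this: by discarding $d^k$ through $\norm{d^k\odot x}_2\le\norm{x}_2$ (using $d^k\le 1$ coordinatewise), you must then control $A_k-A^k$ in the $2$-norm, which is where the dimension factor enters.

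The paper avoids this by choosing the other H\"older pairing on the Hadamard product: $\norm{d^k\odot(A_k-A^k)}_2 \le \norm{d^k}_2\,\norm{A_k-A^k}_\infty \le \norm{A_k-A^k}_\infty$, since $\norm{d^k}_2\le\norm{d^k}_1=1$. In the sup norm the $\pi_{\theta_k}$-average really is a convex combination, so $\norm{A_k-A^k}_\infty\le 2\norm{Q_k-Q^k}_\infty\le 2\norm{Q_k-Q^k}_2$ with the clean constant $2$, and the rest of your argument (the outer Cauchy--Schwarz giving $\sqrt{E\norm{\nabla J^k}^2}\sqrt{E\norm{Q_k-Q^k}^2}$) goes through unchanged. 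As written, your proof establishes the proposition only with $2$ replaced by $1+\sqrt{\abs{\A}}$, which would propagate a spurious dimension dependence into the actor recursion; swapping which factor of the Hadamard product absorbs $d^k$ repairs it.
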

\begin{proof}
    We have
    \begin{align}
    &\Bigm\lvert\innorm{\nabla  J^k ,d^k\odot  (A_{k}-A^k )} \Bigm\rvert \leq \norm{\nabla  J^k} \norm{d^k\odot  (A_{k}-A^k )} ,\qquad\text{(from Cauchy inequlaity)}\\
    &\leq\norm{\nabla J^k} \norm{d^k }\norm{(A_{k}-A^k )}_\infty,\qquad \text{(as $\sum_{i}(a_ib_i)^2\leq (\max_{i}a_i^2)(\sum_{i}b_i^2)$)} \\
        &\leq \norm{\nabla J^k}   \norm{A_{k}-A^k }_\infty,\qquad\text{( as $1=\norm{d^k}_1 \geq \norm{d^k }_2$)} \\
    \end{align}
Additionally, from definition, we have
\begin{align}
    &\abs{A_k(s,a)-A^k(s,a)}=\abs{Q_k(s,a)-\sum_{a}\pi(a|s)Q_k(s,a)-Q^k(s,a) +\sum_{a}\pi(a|s)Q_k(s,a)}\\
    &\leq\abs{Q_k(s,a)-Q^k(s,a)}+\abs{\sum_{a}\pi(a|s)Q_k(s,a) -\sum_{a}\pi(a|s)Q_k(s,a)},\qquad\text{(Triangle inequality)}\\
    &\leq\norm{Q_k-Q^k}_\infty+ \sum_{a}\pi(a|s)\abs{Q_k(s,a) -Q_k(s,a)},\qquad\text{}\\
     &\leq 2\norm{Q_k-Q^k}_\infty.
\end{align}

Putting this back, we get

    \begin{align}
        &E\Bigm\lvert\innorm{d^k\odot   A^k ,d^k\odot  (A_{k}-A^k )} \Bigm\rvert \leq 2E\bigm[\norm{\nabla J^k}\norm{Q_{k}-Q^k }_\infty\bigm],\qquad\text{} \\
        &\leq 2E\bigm[\norm{\nabla J^k}\norm{Q_{k}-Q^k }\bigm],\qquad\text{(as $\norm{x}_2\geq \norm{x}_\infty$)} \\
        &\leq 2\sqrt{E\norm{\nabla J^k}^2_2}\sqrt{E\norm{Q_{k}-Q^k }_2^2},\qquad\text{(from  Cauchy $(E\innorm{x,y})^2 \leq E\norm{x}^2E\norm{y}^2$)}.
    \end{align}
\end{proof}

\begin{lemma}\label{app:rs:crticRec} [Actor Recursion] We have
\[a_{k}-a_{k+1}\geq  \frac{\eta_k}{1-\gamma}\Bigm[y^2_k -2y_kz_k- \frac{2L\eta_k}{(1-\gamma)^3}\Bigm].
\]
\end{lemma}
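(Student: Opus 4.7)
The target bound rearranges to $a_{k+1} \le a_k - \frac{\eta_k}{1-\gamma}\bigl[y_k^2 - 2 y_k z_k - \frac{2L\eta_k}{(1-\gamma)^3}\bigr]$, so since $a_k - a_{k+1} = \mathbb{E}[J^{k+1} - J^k]$ by definition of $a_k$, the plan is to lower-bound $\mathbb{E}[J^{k+1}-J^k]$ directly. Fortunately, the two ingredients needed are already established in the excerpt: the Sufficient Increase Lemma (\ref{app:rs:SufficientIncreaseLemma}) controls this one-step improvement in terms of the true gradient squared plus a bias cross-term, and Proposition~\ref{app:rs:varBound} controls the cross-term in terms of the expected gradient norm and critic error. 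So the proof is essentially a clean chaining of these two results.

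First, I will invoke Lemma~\ref{app:rs:SufficientIncreaseLemma} to write
\[
\mathbb{E}[J^{k+1} - J^k] \;\ge\; \frac{\eta_k}{1-\gamma}\,\mathbb{E}\Bigl[\,\norm{\nabla J^k}^2 + \innorm{\nabla J^k,\, d^k \odot (A_k - A^k)} - \tfrac{2L\eta_k}{(1-\gamma)^3}\Bigr].
\]
The first and third terms are deterministic constants in expectation and immediately give the $y_k^2$ and $\frac{2L\eta_k}{(1-\gamma)^3}$ terms once I recall $y_k^2 := \mathbb{E}\norm{\nabla J^k}^2$. The only work is the middle (bias) term, which could in principle be negative and needs to be lower-bounded.

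Next, I lower-bound the cross term by its negative absolute value and apply Proposition~\ref{app:rs:varBound}, yielding
\[
\mathbb{E}\,\innorm{\nabla J^k,\, d^k \odot (A_k - A^k)} \;\ge\; -\mathbb{E}\,\bigl\lvert\innorm{\nabla J^k,\, d^k \odot (A_k - A^k)}\bigr\rvert \;\ge\; -2\sqrt{\mathbb{E}\norm{\nabla J^k}^2}\,\sqrt{\mathbb{E}\norm{Q_k - Q^k}^2} \;=\; -2 y_k z_k,
\]
using the shorthand definitions of $y_k$ and $z_k$. Substituting this into the sufficient-increase bound gives exactly
\[
a_k - a_{k+1} \;\ge\; \frac{\eta_k}{1-\gamma}\Bigl[\,y_k^2 - 2 y_k z_k - \tfrac{2L\eta_k}{(1-\gamma)^3}\Bigr],
\]
which is the claim.

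There is no substantial obstacle here; the proof is a two-line combination of Lemma~\ref{app:rs:SufficientIncreaseLemma} and Proposition~\ref{app:rs:varBound}. The only place one must be slightly careful is in dropping the signed cross term to its absolute value so that the direction of the inequality is preserved when moving from the (random) inner product to the product of the root-mean-square quantities $y_k$ and $z_k$; everything else is a direct substitution of definitions.
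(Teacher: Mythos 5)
Your proposal is correct and follows exactly the paper's own argument: invoke the Sufficient Increase Lemma \ref{app:rs:SufficientIncreaseLemma}, lower-bound the signed cross term by minus its expected absolute value, and control that with Proposition \ref{app:rs:varBound} to obtain $-2y_kz_k$. Nothing is missing and no different route is taken.
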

\begin{proof}
From Sufficient Increase Lemma \ref{app:rs:SufficientIncreaseLemma}, we have
\begin{align*}
&E[ J^{k+1} -J^k ]\geq  \frac{\eta_k}{1-\gamma}E\Bigm[\norm{\nabla J^k }^2 +\innorm{\nabla J^k ,d^k\odot (A_{k}-A^k )}- \frac{2L\eta_k}{(1-\gamma)^3}\Bigm],\\
&\geq  \frac{\eta_k}{1-\gamma}\Bigm[E\norm{\nabla J^k }^2 -E\abs{\innorm{\nabla J^k ,d^k\odot (A_{k}-A^k )}}- \frac{2L\eta_k}{(1-\gamma)^3}\Bigm],\qquad\text{(as $E[a] \geq -E[\abs{a}]$)}\\&\geq  \frac{\eta_k}{1-\gamma}\Bigm[E\norm{\nabla J^k }^2 -2\sqrt{E\norm{\nabla J^k}^2}   \sqrt{E\norm{Q_{k}-Q^k }^2}- \frac{2L\eta_k}{(1-\gamma)^3}\Bigm],\qquad\text{(from Lemma \ref{app:rs:varBound})}.
\end{align*}
    
\end{proof}

\begin{proposition}\label{app:rs:GDLy} [Gradient Domination] We have
    \[a_k \leq \frac{\sqrt{S}C_{PL}}{c}y_k.\]
\end{proposition}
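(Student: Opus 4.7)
The plan is to obtain the bound $a_k \le \frac{\sqrt{S}C_{PL}}{c}\, y_k$ by taking the (pointwise, sample-path) Gradient Domination Lemma~\ref{bg:GDL}, lifting it to an inequality in expectation, and then upgrading the first moment of $\|\nabla J^k\|$ to its second moment via Jensen's inequality. Since this is precisely the reasoning already used in Proposition~\ref{app:bg:local2global}, no new ingredient is required beyond bookkeeping.

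Concretely, I would proceed in three short steps. First, apply Lemma~\ref{bg:GDL} at iterate $\theta_k$ to obtain
\[
\bigl\|\nabla J^{\pi_{\theta_k}}\bigr\|_2 \;\ge\; \frac{c}{\sqrt{S}\,C_{PL}}\bigl(J^* - J^{\pi_{\theta_k}}\bigr),
\]
which holds on every sample path (by the definitions of $c$ and $C_{PL}$ as worst-case/infimum constants over the trajectory, these are deterministic). Second, take expectations of both sides; since the right-hand side is linear in $J^{\pi_{\theta_k}}$ this immediately yields
\[
E\bigl\|\nabla J^k\bigr\|_2 \;\ge\; \frac{c}{\sqrt{S}\,C_{PL}}\,E\bigl[J^* - J^k\bigr] \;=\; \frac{c}{\sqrt{S}\,C_{PL}}\,a_k.
\]
Third, apply Jensen's inequality to the concave function $\sqrt{\cdot}$ to pass from the first to the second moment:
\[
E\bigl\|\nabla J^k\bigr\|_2 \;\le\; \sqrt{E\bigl\|\nabla J^k\bigr\|_2^{\,2}} \;=\; y_k.
\]
Chaining these two inequalities gives $a_k \le \frac{\sqrt{S}\,C_{PL}}{c}\,y_k$, which is the desired claim.

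There is no real obstacle: the only subtlety worth flagging is that Lemma~\ref{bg:GDL}'s constants $c$ and $C_{PL}$ are defined as maxima/minima over the iterates, so one must verify that they are deterministic constants rather than random quantities that would prevent taking expectations cleanly. This is guaranteed by the standing assumptions stated after Lemma~\ref{bg:GDL} ($\mu \succ 0$ together with the initialization/boundedness conditions that keep $c$ bounded away from zero uniformly), so the pointwise inequality holds with the same constants for every $\omega$ in the probability space, and taking expectations is legitimate.
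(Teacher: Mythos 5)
Your proof is correct and matches the paper's own argument exactly: apply the Gradient Domination Lemma pointwise, take expectations, and use Jensen's inequality to upgrade $E\|\nabla J^k\|$ to $\sqrt{E\|\nabla J^k\|^2}=y_k$. Your remark about $c$ and $C_{PL}$ being deterministic worst-case constants is a fair point that the paper glosses over, but it does not change the route.
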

\begin{proof}
From GDL, we have
    \begin{align}
        J^*- J^k &\leq \frac{\sqrt{S}C_{PL}}{c}\norm{\nabla J^k}\\
       \implies E[J^*- J^k] &\leq \frac{\sqrt{S}C_{PL}}{c}E\norm{\nabla J^k}\\&\leq \frac{\sqrt{S}C_{PL}}{c}\sqrt{E\norm{\nabla J^k}^2},
    \end{align}
where the last inequality comes from the Jensen's inequality $(E[x])^2 \leq E[x^2]$.
\end{proof}

\subsection{Critic Recursion} Recall that in the Algorithm \ref{main:alg:AC}, we have the following updates: $(s,a)\sim d^{k}$ $s'\sim P^k(\cdot|s,a), a'\sim \pi_k(\cdot|s')$, and 
\[Q_{k+1}(s,a) = Q_k(s,a)+\beta_k U_{k+1},\]
where $\norm{\pi_{k+1}-\pi_{k}}\leq \frac{2L^\pi_1}{(1-\gamma)^2}\eta_k$,  $\eta_k \to 0$, and  
$U_{k+1} = \Bigm[R(s,a)+\gamma Q_k(s',a') - Q_k(s,a) \Bigm]$.

\begin{lemma}[Critic Recursion] \label{main:rs:Critic:conv apx} In Algorithm \ref{main:alg:AC}, the critic error follows the following recursion
\[z^2_{k+1} \leq (1-2\lambda\beta_k)z^2_k+ 2c_u^2\beta^2_k+2c_q^2\eta_k^2 + \frac{2L^q_2}{(1-\gamma)^4}\eta_k^2z_k+\frac{2\gamma \sqrt{S}A}{(1-\gamma)^3}\eta_ky_kz_k. \]
\end{lemma}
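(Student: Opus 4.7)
The plan is to bound $z_{k+1}^2 = \mathbb{E}\|Q_{k+1} - Q^{k+1}\|^2$ by isolating three distinct error sources: the contraction of the stochastic critic step toward the current target $Q^k$, the variance of the one-sample update, and the drift of the target caused by the policy moving from $\pi_k$ to $\pi_{k+1}$. Concretely, I would begin by decomposing
\[
\|Q_{k+1}-Q^{k+1}\|^2 = \|Q_{k+1}-Q^{k}\|^2 + 2\langle Q_{k+1}-Q^{k},\, Q^{k}-Q^{k+1}\rangle + \|Q^{k}-Q^{k+1}\|^2,
\]
condition on the filtration $\mathcal{F}_k$, and bound each piece separately.

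For the first squared term, I would write $Q_{k+1} = T^{\pi_k}_{\beta_k} Q_k + \beta_k\bigl(U_{k+1} - \mathbb{E}[U_{k+1}\mid\mathcal{F}_k]\bigr)$, where the second piece is mean-zero given $\mathcal{F}_k$, so the cross term vanishes. Re-running the calculation of Proposition~\ref{app:rs:QevalFixPol} with a generic step $\beta_k$ in place of $\lambda/2$, Assumption~\ref{main:ass:exploration} yields the $(1-2\lambda\beta_k+O(\beta_k^2))$ contraction on $\|Q_k-Q^k\|^2$; using the uniform sparse bound $|U_{k+1}|\leq c_u$ gives a conditional-variance contribution of order $\beta_k^2$. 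Absorbing the $O(\beta_k^2)$ slack into the variance, this produces $\mathbb{E}[\|Q_{k+1}-Q^k\|^2\mid\mathcal{F}_k]\leq (1-2\lambda\beta_k)\|Q_k-Q^k\|^2 + 2c_u^2\beta_k^2$. For the last squared term, the Lipschitz bound $\|Q^{k+1}-Q^k\|\leq c_q\eta_k$ (Table~\ref{tb:constants}) gives $\|Q^k-Q^{k+1}\|^2\leq 2c_q^2\eta_k^2$, where the factor of $2$ is reserved to absorb a Young's inequality split from the cross term.

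The delicate step, which I expect to be the main obstacle, is the cross inner product $2\langle Q_{k+1}-Q^k,\,Q^k-Q^{k+1}\rangle$. Here I would use the $L^q_2$-smoothness of $Q^\pi$ in $\theta$ to expand
\[
Q^{k+1}-Q^{k} \;=\; \nabla Q^{k}\,(\theta_{k+1}-\theta_k) \;+\; R_k,\qquad \|R_k\|\leq \tfrac{1}{2}L^q_2\|\theta_{k+1}-\theta_k\|^2.
\]
The remainder piece, combined with $\|\theta_{k+1}-\theta_k\|\leq 2\eta_k/(1-\gamma)^2$ and Cauchy–Schwarz, contributes the $\tfrac{2L^q_2}{(1-\gamma)^4}\eta_k^2 z_k$ term. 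The first-order piece is the sharp one: using the identity $\nabla_\theta Q^{\pi_k}(s,a)=\gamma\sum_{s'}P(s'|s,a)\,\nabla_\theta v^{\pi_k}(s')$, its inner product with $Q_{k+1}-Q^k$, after Cauchy–Schwarz and taking expectation, factors into $\sqrt{\mathbb{E}\|Q_{k+1}-Q^k\|^2}\le z_k$ (up to an $O(\beta_k)$ slack that is absorbed) times the norm of the expected policy increment $\tfrac{\eta_k}{1-\gamma}\,d^k\!\odot\!A_k$, which matches $\|\nabla J^k\|$ up to the $\gamma\sqrt{S}A$ factor coming from summing $P(\cdot|s,a)$ across next states and actions. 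This is what produces precisely the $\tfrac{2\gamma\sqrt{S}A}{(1-\gamma)^3}\eta_k y_k z_k$ contribution.

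Finally, I would assemble the three bounds, take unconditional expectation, and apply Cauchy–Schwarz of the form $\mathbb{E}[\|X\|\|Y\|]\leq \sqrt{\mathbb{E}\|X\|^2}\sqrt{\mathbb{E}\|Y\|^2}$ whenever a conditional product needs to be expressed in terms of $z_k$ and $y_k$. The main technical challenge is identifying the policy-gradient structure hidden inside the first-order Taylor term carefully enough that the coupling emerges as $\eta_k y_k z_k$ rather than the cruder $\eta_k z_k$: this sharpness is what makes the Lyapunov recursion of Section~\ref{sec: main policy gradient} close and ultimately yields the $O(\epsilon^{-3})$ complexity.
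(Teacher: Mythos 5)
Your proposal is correct and follows essentially the same route as the paper: the same plus--minus decomposition around $Q^k$ (merely regrouped), the sufficient-exploration assumption for the $(1-2\lambda\beta_k)$ contraction, Young's inequality to absorb the $\beta_k U_{k+1}$ cross term into the $2c_u^2\beta_k^2$ and $2c_q^2\eta_k^2$ pieces, and the $L^q_2$-smoothness expansion of $Q^{k+1}-Q^k$ together with the conditional replacement of $\mathbf{1}_k\odot A_k$ by $d^k\odot A_k$ and the $\nabla Q$-to-$\nabla J$ comparison to extract the $\eta_k y_k z_k$ coupling. The only cosmetic difference is that you pair the drift term with $Q_{k+1}-Q^k$ rather than $Q_k-Q^k$, which you correctly note introduces only an absorbable $O(\beta_k\eta_k)$ slack.
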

\begin{proof}
We have
\begin{align*}
&E\norm{Q_{k+1}-Q^{k+1}}^2=E \Bigm\lVert Q_{k}+\beta_kU_{k+1} -Q^{k+1}\Bigm\rVert^2,\quad \text{(from update rule of $Q_k$)} \\
=&E \Bigm\lVert Q_{k}-Q^k+\beta_kU_{k+1} +Q^k-Q^{k+1}\Bigm\rVert^2,\quad\text{(plus-minus $Q^k$)} \\
= & E\Bigm(\norm{Q_{k}-Q^k}^2+ \beta^2_k\norm{U_{k+1}}^2 +\norm{Q^k-Q^{k+1}}^2 + 2\beta_k\innorm{U_{k+1},Q^k-Q^{k+1}}\\&\qquad+2\beta_k\innorm{Q_k-Q^k,U_{k+1}}+2 \innorm{Q_k-Q^k,Q^k-Q^{k+1}}\Bigm),\qquad\text{(expansion of $(a+b+c)^2$)}\\
\leq & E\Bigm((1-2\beta_k\lambda)\norm{Q_{k}-Q^k}^2+ \beta^2_k\norm{U_{k+1}}^2+\norm{Q^k-Q^{k+1}}^2 + 2\beta_k\innorm{U_{k+1},Q^k-Q^{k+1}}\\&\qquad+2 \innorm{Q_k-Q^k,Q^k-Q^{k+1}}\Bigm),\qquad\text{(using sufficient exploration assumption)}\\
\leq & E\Bigm((1-2\beta_k\lambda)\norm{Q_{k}-Q^k}^2+ 2\beta^2_k\norm{U_{k+1}}^2+ 2\norm{Q^k-Q^{k+1}}^2 +2 \innorm{Q_k-Q^k,Q^k-Q^{k+1}}\Bigm),\\&\qquad\qquad\text{(using $\norm{a}^2+\norm{b}^2\geq 2\innorm{a,b}$)}\\
\leq & E\Bigm((1-2\beta_k\lambda)\norm{Q_{k}-Q^k}^2+ 2\beta^2_kc_u^2 +2\eta_k^2c_q^2 +2 \innorm{Q_k-Q^k,Q^k-Q^{k+1}}\Bigm),\\&\qquad\qquad\text{( as $\norm{Q^k-Q^{k+1}}\leq c_q\eta_k$ and $\norm{U_k}\leq c_u$ )}\\
% \leq & E\Bigm((1-2\beta_k\lambda)\norm{Q_{k}-Q^k}^2+ 2\beta^2_kc_u^2 +2\eta_k^2c_q^2 +2 \norm{Q_k-Q^k}\norm{Q^k-Q^{k+1}}\Bigm),\qquad\text{( using Cauchy Schwartz )}\\
\end{align*}

Now, we only focus on 
\begin{align*}
&E\innorm{Q_k-Q^k, Q^k-Q^{k+1}}\\\leq& E\innorm{Q_k-Q^k,Q^k-Q^{k+1}+\nabla Q^k(\theta_{k+1}-\theta_k)}+E\innorm{Q_k-Q^k,\nabla Q^k(\theta_{k+1}-\theta_k)},\qquad \text{(plus-minus )}\\
\leq & E\Bigm[\norm{Q_k-Q^k}\norm{{Q^k-Q^{k+1}+\nabla Q^k(\theta_{k+1}-\theta_k)}}+\innorm{Q_k-Q^k,\nabla Q^k(\theta_{k+1}-\theta_k)}\Bigm],\qquad \text{(Cauchy Schwartz )}\\
\leq & E\Bigm[\frac{1}{2}L^q_2\norm{Q_k-Q^k}\norm{\theta_{k+1}-\theta_k}^2
+\innorm{Q_k-Q^k,\nabla Q^k(\theta_{k+1}-\theta_k)}\Bigm],\qquad \text{(smoothness of $Q^\pi$, see Table \ref{tb:constants} )}\\
\leq & E\Bigm[\frac{2L^q_2\eta_k^2}{(1-\gamma)^4}\norm{Q_k-Q^k}
+\frac{\eta_k}{1-\gamma}\innorm{Q_k-Q^k,\nabla Q^k(\mathbf{1}_k\odot A_k)}\Bigm],\qquad \text{(from Algorithm \ref{main:alg:AC})}\\
\leq & E\Bigm[\frac{2L^q_2\eta_k^2}{(1-\gamma)^4}\norm{Q_k-Q^k}
+\frac{\eta_k}{1-\gamma}\innorm{Q_k-Q^k,\nabla Q^k(d^k\odot A_k)}\Bigm],\qquad \text{(Conditional expectation, $(s_k,a_k)\sim d^k$ )}\\
\leq & E\Bigm[\frac{2L^q_2\eta_k^2}{(1-\gamma)^4}\norm{Q_k-Q^k}
+\frac{\eta_k}{1-\gamma}\norm{Q_k-Q^k}\norm{\nabla Q^k(d^k\odot A_k)}\Bigm],\qquad \text{(Cauchy Schwartz)}\\
\leq &\frac{2L^q_2\eta_k^2}{(1-\gamma)^4}\sqrt{E\norm{Q_k-Q^k}^2}
+\frac{\eta_k}{1-\gamma}\sqrt{E\norm{Q_k-Q^k}^2}\sqrt{E\norm{\nabla Q^k(d^k\odot A_k)}^2},\qquad \text{(Jensen and Cauchy inequalities )}\\
\leq &\frac{2L^q_2\eta_k^2}{(1-\gamma)^4}\sqrt{E\norm{Q_k-Q^k}^2}
+\frac{2\gamma \sqrt{S}A\eta_k}{(1-\gamma)^3}\sqrt{E\norm{Q_k-Q^k}^2}\sqrt{E\norm{\nabla J^k}^2},\qquad \text{(using Proposition \ref{app:rs:nablaQ2nablaJ} )}
\end{align*}
To summarize, we have the following recursion:
\[z^2_{k+1} \leq (1-2\lambda\beta_k)z^2_k+ 2c_u^2\beta^2_k+2c_q^2\eta_k^2 + \frac{2L^q_2}{(1-\gamma)^4}\eta_k^2z_k+\frac{2\gamma \sqrt{S}A}{(1-\gamma)^3}\eta_ky_kz_k. \]
\end{proof}

\begin{proposition}\label{app:rs:nablaQ2nablaJ}
    \[ \norm{\nabla Q^k(d^k\odot A_k)}^2\leq \frac{4\gamma^2SA^2}{(1-\gamma)^4}\norm{\nabla J^k}^2.
\]
\end{proposition}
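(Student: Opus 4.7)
The quantity $\nabla Q^k(d^k\odot A_k)$ is a Jacobian--vector product; I read it as the directional derivative $D_u Q^{\pi_\theta}\big|_{\theta=\theta_k}$ of the map $\theta\mapsto Q^{\pi_\theta}$ in the direction $u:=d^k\odot A_k$ (this is how $\nabla Q^k(\theta_{k+1}-\theta_k)$ was used in the preceding Taylor-style step). The strategy is to (i) compute $D_u Q^{\pi_k}$ in closed form by differentiating the Bellman fixed-point equation, (ii) simplify the resulting source term $\Delta$ using the softmax policy-gradient identity $\nabla J^k(s,a)=(1-\gamma)^{-1}d^k(s)\pi_k(a|s)A^k(s,a)$, which transforms $\Delta$ into an expression involving $\nabla J^k$, and (iii) apply Cauchy--Schwarz together with the uniform bound $\|A_k\|_\infty\le 2/(1-\gamma)$ to obtain the stated inequality.

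\textbf{Step 1 (directional derivative of $Q^\pi$).} From $Q^\pi(s,a)=R(s,a)+\gamma\sum_{s'}P(s'|s,a)v^\pi(s')$ I get $D_u Q^\pi(s,a)=\gamma\sum_{s'}P(s'|s,a)\,D_u v^\pi(s')$. To handle $D_u v^\pi$, I differentiate $v^\pi=R^\pi+\gamma P^\pi v^\pi$. The softmax score identity gives $D_u\pi(a|s)=\pi(a|s)\bigl(u(s,a)-\bar u(s)\bigr)$ with $\bar u(s)=\sum_{a}\pi(a|s)u(s,a)$, and after collecting terms the reward-plus-transition contribution simplifies (the $\bar u(s)$ piece contributes $\bar u(s)v^\pi(s)$, which is exactly cancelled, producing an advantage). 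This yields the clean formula
\[
D_u v^\pi=(I-\gamma P^\pi)^{-1}\Delta,\qquad \Delta(s):=\sum_a \pi_k(a|s)\,u(s,a)\,A^k(s,a).
\]

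\textbf{Step 2 (source term in terms of $\nabla J^k$).} Using $\pi_k(a|s)A^k(s,a)=(1-\gamma)\nabla J^k(s,a)/d^k(s)$ and the definition of $u$, the $d^k(s)$ factors cancel and $\Delta(s)=(1-\gamma)\sum_a A_k(s,a)\nabla J^k(s,a)$. Cauchy--Schwarz on the inner sum, together with $\|A_k\|_\infty\le 2/(1-\gamma)$ (which holds along the iterates of Algorithm \ref{main:alg:AC}), gives $\Delta(s)^2\le 4A\sum_a \nabla J^k(s,a)^2$.

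\textbf{Step 3 (combine).} Because $(1-\gamma)(I-\gamma P^{\pi_k})^{-1}$ is row-stochastic, $\|D_u v^k\|_\infty\le(1-\gamma)^{-1}\|\Delta\|_\infty$. Plugging into the step-1 formula and bounding coordinate-wise,
\[
\|\nabla Q^k u\|^2=\gamma^2\sum_{s,a}\!\Bigl(\sum_{s'}P(s'|s,a)D_u v^k(s')\Bigr)^{\!2}\le \gamma^2 SA\,\|D_u v^k\|_\infty^2\le \frac{\gamma^2 SA}{(1-\gamma)^2}\|\Delta\|_\infty^2,
\]
and then $\|\Delta\|_\infty^2\le 4A\|\nabla J^k\|^2$ yields the claimed inequality (up to the $(1-\gamma)^{-2}$ slack that the paper absorbs into its looser $(1-\gamma)^{-4}$ constant; alternatively, bounding the resolvent by $1/(1-\gamma)$ twice, once when passing $\|A_k\|_\infty\le 2/(1-\gamma)$ into $\Delta$ and once when controlling $D_u v^k$, reproduces the advertised $(1-\gamma)^{-4}$).

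\textbf{Main obstacle.} The analytic work is light, but the bookkeeping is delicate: one must decide which norm to pair with which object (the $\ell^\infty$ bound on the resolvent keeps only one factor of $S$, whereas an $\ell^2$ route produces an $S^2$ that is undesirable), and one must correctly interpret the notational convention $d^k\odot A_k$ so that the $\pi_k$ and $d^k$ factors cancel cleanly against the policy-gradient identity. Once these conventions are pinned down, the cancellations in Step 2 and the dimension-free bound on the resolvent in Step 3 produce the desired constant $4\gamma^2 SA^2/(1-\gamma)^4$ without any further technical detour.
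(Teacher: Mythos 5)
Your proposal is correct and follows essentially the same route as the paper: both differentiate the Bellman fixed-point equation to express $\nabla Q^k$ through the derivative of $v^{\pi}$ (your resolvent identity $D_u v^\pi=(I-\gamma P^\pi)^{-1}\Delta$ is exactly the policy-gradient formula $\tfrac{1}{1-\gamma}\sum_{s'}P(s'|s,a)d^k_{s'}(s'')A^k(s'',a'')$ that the paper writes coordinate-wise), and then bound the resulting sum using $\|A_k\|_\infty\le 2/(1-\gamma)$ and a Cauchy--Schwarz/Jensen step to extract $\|\nabla J^k\|^2$ with the factor $SA^2$. Your $\ell_\infty$ bound on the row-stochastic normalized resolvent is a slightly cleaner execution of the paper's Jensen-on-the-quadruple-sum estimate and, as you note, even yields a sharper power of $(1-\gamma)$ than the stated constant.
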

\begin{proof}
From definition, we have 
\begin{align}
Q^\pi(s,a) &= R(s,a)+\gamma\sum_{s'}P(s'|s,a)v^\pi(s')\\
\implies \frac{d}{d\theta(s",a")}Q^\pi(s,a)
    &= \gamma\sum_{s'}P(s'|s,a)\frac{d}{d\theta(s",a")}v^\pi(s')\\
&=\frac{\gamma}{1-\gamma}\sum_{s'}P(s'|s,a)d^\pi_{s'}(s")A^\pi(s",a").
\end{align}
This implies that
\begin{align}
&\norm{\nabla Q^k(d^k\odot A_k)}^2= \sum_{s,a}\Bigm(\sum_{s",a"}\frac{dQ^k(s,a)}{d\theta(s",a")}d^k(s",a")A_k(s",a")\Bigm)^2\\&=\frac{1}{(1-\gamma)^2}\sum_{s,a}\Bigm(\sum_{s",a"}\gamma\sum_{s'}P(s'|s,a)d^k_{s'}(s")A^k(s",a")d^k(s",a")A_k(s",a")\Bigm)^2,\qquad\text{(putting back the value )}\\
&\leq\frac{\gamma^2}{(1-\gamma)^2}\sum_{s,a}\Bigm(\sum_{s",a"}\sum_{s'}P(s'|s,a)d^k_{s'}(s")d^k(s",a")\abs{A^k(s",a")}\abs{A_k(s",a")}\Bigm)^2,\qquad\text{(taking absolute values)}\\
&=\frac{4\gamma^2SA}{(1-\gamma)^4}\Bigm(\sum_{s",a",s'}P(s'|s,a)d^k_{s'}(s")d^k(s",a")\abs{A^k(s",a")}\Bigm)^2\\
&\leq\frac{4\gamma^2SA^2}{(1-\gamma)^4}\sum_{s",a",s'}P(s'|s,a)d^k_{s'}(s")\Bigm(d^k(s",a")A^k(s",a")\Bigm)^2,\\&\qquad\qquad \text{(from Jensen, as $\sum_{s",a",s'}P(s'|s,a)d^k_{s'}(s")=A$)}\\
&\leq\frac{4\gamma^2SA^2}{(1-\gamma)^4}\sum_{s",a"}\Bigm(d^k(s",a")A^k(s",a")\Bigm)^2,\qquad \text{( as $P(s'|s,a)d^k_{s'}(s")\leq 1$)}\\
&=\frac{4\gamma^2SA^2}{(1-\gamma)^4}\norm{\nabla J^k}^2.
\end{align}

% From policy gradient theorem \cite{sutton1999policy}, we know $\nabla J^k = d^k_\mu \odot A^k = \sum_{s}\mu(s)d^k_s\odot A^k$, furthermore it can be shown that
%     \[\nabla Q^k(s,a) = d^k_{sa}\odot A^k,\]
% where $d^k _{sa}(s') = (1-\gamma)\mathbf{1}(s'=s) +\gamma\sum_{s"}P(s"|s,a)d^\pi_{s"}(s')$.  

% \[ c = \norm{\frac{d^k _\mu}{d^k _{d^k }}}_\infty\]
\end{proof}

\section{Solving Recursions}\label{app:sec:SolRec}
In this section, we solve the following recursions:
\begin{lemma} \label{app:rs:recSol}  The following recursions 
\begin{align*} 
    a_{k+1}&\leq a_k - \frac{\eta_k}{1-\gamma}y^2_k +\frac{2\eta_k}{1-\gamma}y_kz_k+ \frac{4L\eta^2_k}{(1-\gamma)^4}\\
    a_k &\leq c_gy_k\\
    z^2_{k+1} &\leq (1-2\lambda\beta_k)z^2_k+ 2c_u^2\beta^2_k+2c_q^2\eta_k^2 + \frac{2L^q_2}{(1-\gamma)^4}\eta_k^2z_k+\frac{2\gamma \sqrt{S}A}{(1-\gamma)^3}\eta_ky_kz_k,
\end{align*}
implies \[
a_k \leq c_l^{-\frac{2}{3}}\Bigm(\max\{c_\eta, 2c_l^2u^3_0\}\Bigm)^{\frac{1}{3}}\Bigm(\frac{1}{\frac{1}{\alpha^3_0}+ 2k}\Bigm)^{\frac{1}{3}}
,\]
 with constants $\alpha_0,c_l,c_2$ defined in Table \ref{tb:constants}.

\end{lemma}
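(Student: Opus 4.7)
The plan follows the three-part strategy signposted in Section~4: (i)~collapse the coupled actor and critic recursions into a single Lyapunov recursion on $x_k := a_k + z_k^2$; (ii)~use gradient domination together with the uniform bound $z_k \leq c_z$ to reduce to a clean scalar recursion of the form $x_{k+1} \leq x_k - c_{13}\eta_k x_k^2 + c_\eta \eta_k^2$; and (iii)~track this recursion by an explicit ODE envelope.

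For step~(i), I would add the two recursions term by term. The cross terms $\frac{2\eta_k}{1-\gamma} y_k z_k$ (actor) and $\frac{2\gamma\sqrt{S}A}{(1-\gamma)^3}\eta_k y_k z_k$ (critic) are handled by a single Young inequality $2 y_k z_k \leq \delta^{-1} y_k^2 + \delta z_k^2$, with $\delta$ small enough that the actor's $-\frac{\eta_k}{1-\gamma} y_k^2$ survives with a strictly negative coefficient and the critic contraction $-2\lambda \beta_k z_k^2$ absorbs the Young-generated $z_k^2$ penalty; matching these coefficients fixes the scale $\beta_k/\eta_k = c_\beta$ recorded in Table~\ref{tb:constants}. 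The leftover $\frac{2 L^q_2}{(1-\gamma)^4}\eta_k^2 z_k$ term is disposed of by $z_k \leq c_z$, and the remaining $\eta_k^2$ contributions aggregate into $c_\eta$. For step~(ii), gradient domination (Proposition~\ref{app:rs:GDLy}) gives $y_k^2 \geq a_k^2/c_g^2$, while the uniform bound gives $z_k^2 \geq z_k^4 / c_z^2$; together with $a_k^2 + z_k^4 \geq \tfrac{1}{2}(a_k + z_k^2)^2 = \tfrac{1}{2} x_k^2$, this yields the Lyapunov recursion
\[
x_{k+1} \leq x_k - \frac{c_l}{16}\,\eta_k\,x_k^2 + c_\eta\,\eta_k^2,
\]
with $c_l$ as in Table~\ref{tb:constants}.

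For step~(iii), I would track $x_k$ by the exact discrete solution $u_k := (\alpha_0^{-3} + 2k)^{-1/3}$ of $u_{k+1}^{-3} - u_k^{-3} = 2$, with $\alpha_0$ chosen so that $x_0 \leq u_0 = \alpha_0$. Picking $\eta_k = c_{14}\,u_k^2$ and tuning $c_{14}$ so that $\frac{c_l}{16} c_{14} \geq c_\eta c_{14}^2 + 2$, the Lyapunov recursion becomes $x_{k+1} \leq x_k - 2\, u_k^2 x_k^2$ modulo an absorbed remainder; the inductive hypothesis $x_k \leq u_k$ then yields $x_{k+1} \leq u_{k+1}$ after invoking convexity of $t \mapsto t^{-3}$ to convert the continuous decrement $-c_{15} u^4$ into its discrete counterpart. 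The base case $x_0 \leq \alpha_0$ is what forces $\alpha_0^3 \geq \max\{c_\eta,\, 2 c_l^2 u_0^3\}/c_l^2$, and this is precisely how the $\max\{\cdot\}$ factor enters the stated bound; since $a_k \leq x_k \leq u_k = \alpha_0(1 + 2\alpha_0^3 k)^{-1/3}$, pulling the $c_l$-dependent constants out of $\alpha_0$ gives the claim. The main obstacle will be the constant bookkeeping in step~(i): simultaneously calibrating $\delta$ and $c_\beta$ so that the actor's $y_k^2$ term retains a strictly negative coefficient \emph{and} the critic's contraction dominates the Young residual, using only $\Theta(\eta_k^2)$ slack. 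Step~(iii) is then a clean one-line induction modulo tuning $c_{14}$ and $\alpha_0$.
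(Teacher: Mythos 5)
Your proposal follows essentially the same route as the paper's proof: summing the two recursions, killing the cross terms with Young's inequality calibrated by the ratio $c_\beta=\beta_k/\eta_k$, converting $y_k^2$ and $z_k^2$ into $a_k^2$ and $z_k^4$ via gradient domination and $z_k\le c_z$, combining them through $(a+b)^2\le 2(a^2+b^2)$ into the scalar recursion $x_{k+1}\le x_k-c_l\eta_k x_k^2+c_\eta\eta_k^2$, and then tracking it inductively against the explicit solution $(\alpha_0^{-3}+2k)^{-1/3}$ of the associated ODE, with the base-case constraint producing the $\max\{c_\eta,2c_l^2u_0^3\}$ factor. This matches the paper's Lemma~\ref{app:rs:recSol} and Lemma~\ref{app:rs:helpRecSol} step for step, up to inessential constant bookkeeping.
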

\begin{proof}
Adding the first and last recursions, and using $z_k\leq c_z$ from Table \ref{tb:constants}, we get
\begin{align*}
     &a_{k+1}+z^2_{k+1}
     \\&\leq a_k+z^2_k - \frac{\eta_ky^2_k}{1-\gamma} -2\lambda\beta_kz^2_k  + 2c_u^2\beta^2_k+\Bigm(\frac{4L}{(1-\gamma)^4}+2c_q^2+\frac{2L^q_2c_z}{(1-\gamma)^4}\Bigm)\eta_k^2 +\Bigm(\frac{2\gamma \sqrt{S}A}{(1-\gamma)^3}+\frac{2}{1-\gamma}\Bigm)\eta_ky_kz_k\\ 
     &\leq a_k+z^2_k - \eta_k\Bigm[\frac{y^2_k}{1-\gamma} +2\lambda c_\beta z^2_k  -\bigm(\frac{2\gamma \sqrt{S}A}{(1-\gamma)^3}+\frac{2}{1-\gamma}\bigm)y_kz_k\Bigm]\\&\qquad+  \Bigm(\underbrace{2c_u^2c^2_\beta+\frac{4L}{(1-\gamma)^4}+2c_q^2+\frac{2L^q_2c_z}{(1-\gamma)^4}}_{:=c_\eta}\Bigm)\eta_k^2 ,\qquad\text{(as $\frac{\beta_k}{\eta_k} = c_\beta$)}\\
      &\leq a_k+z^2_k - \frac{\eta_k}{2}\Bigm[\frac{y^2_k}{(1-\gamma)} +2\lambda c_\beta z^2_k\Bigm]  + c_\eta\eta_k^2,\qquad\text{(using $\frac{a^2+b^2}{2}\geq ab$ with defn of $c_\beta$)}\\
&= a_k+z^2_k - \frac{\eta_k}{2}\Bigm[\frac{y^2_k}{(1-\gamma)} +2\lambda c_\beta c^2_z (\frac{z_k}{c_z})^2\Bigm]  + c_\eta\eta_k^2,\qquad\text{(divide-multiply)}\\
&= a_k+z^2_k - \frac{\eta_k}{2}\Bigm[\frac{y^2_k}{(1-\gamma)} +2\lambda c_\beta c^2_z (\frac{z_k}{c_z})^4\Bigm]  + c_\eta\eta_k^2,\qquad\text{(as $\frac{z_k}{c_z}\leq 1$ by defn of $c_z$, see Table \ref{tb:constants})}\\
&\leq a_k+z^2_k - \frac{\eta_k}{2}\Bigm[\frac{a^2_k}{c_g^2(1-\gamma)} + \frac{2\lambda c_\beta}{c^2_z} z_k^4\Bigm]  + c_\eta\eta_k^2,\qquad\text{(using $a_k\leq c_gy_k$)}\\
&\leq a_k+z^2_k - 2\eta_kc_l\Bigm[a^2_k + z_k^4\Bigm]  + c_\eta\eta_k^2,\qquad\text{(as $c_l := 4\min\{\frac{a^2_k}{c_g^2(1-\gamma)} , \frac{2\lambda c_\beta}{c^2_z}\}$)}\\
&\leq a_k+z^2_k - c_l\eta_k\Bigm(a_k + z_k^2\Bigm)^2  + c_\eta\eta_k^2,\qquad\text{(using $(a+b)^2\leq 2(a^2+b^2)$)}.
\end{align*}

Taking $u_k = a_k+z_k^2, \omega_k = \sqrt{\eta_k}$, the above recursion is of the form:
\begin{align}
 u_k &\leq u_k - c_l\eta_ku_k^2+\frac{1}{2}c_\eta\eta_k^2.  
 \end{align}
 Taking $c_1 = c_l $ and $c_2 = \max\{c_\eta, 2c_1^2u^3_0\}$ to ensure $\alpha_0 = c_1^{\frac{2}{3}}c_2^{-\frac{1}{3}}u_0\leq 2^{-\frac{1}{3}}$ in Lemma \ref{app:rs:helpRecSol}, we get
 \begin{align}
u_k \leq c_l^{-\frac{2}{3}}\Bigm(\max\{c_\eta, 2c_l^2u^3_0\}\Bigm)^{\frac{1}{3}}\Bigm(\frac{1}{\frac{1}{\alpha^3_0}+ 2k}\Bigm)^{\frac{1}{3}}.
 \end{align}
Note that $a_k \leq u_k$ as $z^2_k\geq 0$, yielding the desired result.
\end{proof}

\begin{lemma}\label{app:rs:helpRecSol}[ODE Tracking for Recursion] Given $\frac{d\alpha_x}{dx} = -\frac{1}{2}\alpha^4_x, \alpha_k = \Bigm(\frac{1}{\frac{1}{\alpha^3_0}+ 2k}\Bigm)^{\frac{1}{3}}$ , and $\eta_k = c_1^{-\frac{1}{3}}c_2^{-\frac{1}{3}}\alpha^2_k$ the recursion,
    \[u_{k+1}\leq u_k - c_1\eta_ku_k^2+\frac{1}{2}c_2\eta^2_k,\]
    then $u_k \leq c_1^{-\frac{2}{3}}c_2^{\frac{1}{3}}\alpha_k$ for all $k\geq 0$, where $\alpha_0 =  c_1^{\frac{2}{3}}c_2^{-\frac{1}{3}}u_0\leq 2^{-\frac{1}{3}}$
\end{lemma}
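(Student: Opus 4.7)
The plan is to prove the bound by induction on $k$ with hypothesis $u_k \leq \beta \alpha_k$, where I abbreviate $\beta := c_1^{-2/3}c_2^{1/3}$. The base case $k=0$ is immediate, since the stated identity $\alpha_0 = c_1^{2/3}c_2^{-1/3}u_0$ rearranges exactly to $u_0 = \beta\alpha_0$.

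For the inductive step I would first verify monotonicity of the recursion's right-hand side in $u$. The map $f(u) := u - c_1\eta_k u^2 + \frac{1}{2}c_2\eta_k^2$ is increasing wherever $2c_1\eta_k u \leq 1$. Substituting $\eta_k = c_1^{-1/3}c_2^{-1/3}\alpha_k^2$ together with the induction hypothesis $u_k \leq \beta\alpha_k$ reduces this condition to $2\alpha_k^3 \leq 1$, which holds because $\alpha_k$ is non-increasing and $\alpha_0 \leq 2^{-1/3}$ by hypothesis. Given monotonicity, I can insert $u_k \leq \beta\alpha_k$ into the recursion, and a direct algebraic simplification with the chosen $\eta_k$ cancels the powers of $c_1$ and $c_2$ and collapses the bound to
\[u_{k+1} \leq \beta\alpha_k - \beta\alpha_k^4 + \frac{1}{2}\beta\alpha_k^4 = \beta\Bigm(\alpha_k - \frac{1}{2}\alpha_k^4\Bigm),\]
where the constants align precisely because $\beta$ was defined to balance the actor and critic scales.

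The crux is then the ODE comparison $\alpha_k - \frac{1}{2}\alpha_k^4 \leq \alpha_{k+1}$, which I would obtain from the continuous trajectory $\frac{d\alpha_x}{dx} = -\frac{1}{2}\alpha_x^4$. Integrating from $k$ to $k+1$ and using that $\alpha_x$ is positive and non-increasing (hence $\alpha_x^4 \leq \alpha_k^4$ on $[k,k+1]$),
\[\alpha_{k+1} - \alpha_k = \int_k^{k+1}\Bigm(-\frac{1}{2}\alpha_x^4\Bigm)\,dx \geq -\frac{1}{2}\alpha_k^4,\]
which rearranges to the desired inequality. Combining with the previous display gives $u_{k+1} \leq \beta\alpha_{k+1}$, closing the induction.

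The main obstacle, and the subtle piece of the argument, is controlling the discretization gap between the Euler-type recursion on $u_k$ and the continuous ODE trajectory for $\alpha_x$. The invariant $\alpha_k^3 \leq 1/2$, ensured by $\alpha_0 \leq 2^{-1/3}$ together with the monotonic decrease of $\alpha_k$, is exactly what keeps the iterates in the regime where the linearization of $f$ behaves well and where the ODE upper-envelopes the discrete update. Without this bound, the linear term in $f$ would no longer dominate the quadratic correction at $u_k$, and the induction step would break down.
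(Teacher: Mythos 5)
Your argument is correct and is essentially the paper's own proof: after the change of variables $\nu_k = c_1^{2/3}c_2^{-1/3}u_k$ the paper runs exactly the same induction, using monotonicity of the quadratic map $\nu \mapsto \nu - \alpha_k^2\nu^2$ below its vertex (guaranteed by $\alpha_k^3 \leq 1/2$) and the same integral comparison $\alpha_{k+1} - \alpha_k = -\frac{1}{2}\int_k^{k+1}\alpha_x^4\,dx \geq -\frac{1}{2}\alpha_k^4$ to close the step. Your version just carries the constants explicitly through $\beta$ instead of rescaling first; the content is identical.
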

\begin{proof}
    Let $\nu_k = c_1^{\frac{2}{3}}c_2^{-\frac{1}{3}}u_k$ and $\alpha_k = c_1^{\frac{1}{6}}c_2^{\frac{1}{6}}\sqrt{\eta_k}$. Then, multiplying both sides with $c_1^{\frac{2}{3}}c_2^{-\frac{1}{3}}$, we get
    \begin{align}
        c_1^{\frac{2}{3}}c_2^{-\frac{1}{3}} u_{k+1}&\leq c_1^{\frac{2}{3}}c_2^{-\frac{1}{3}} u_k - c_1^{\frac{1}{3}}c_2^{\frac{1}{3}}\eta_k \Bigm(c_1^{\frac{2}{3}}c_2^{-\frac{1}{3}}u_k\Bigm)^2+\frac{1}{2}c_1^{\frac{2}{3}}c_2^{\frac{2}{3}}\eta^2_k\\
        \implies  \nu_{k+1}&\leq \nu_k - \alpha_k^2 \nu_k^2+\frac{1}{2}\alpha^4_k.
    \end{align}
Now let $f_k(\nu) = \nu-\alpha^2_k\nu^2$ and assume, $\nu_k\leq \alpha_k$, then
\begin{align}
    \nu_{k+1}&\leq f_k(\nu_k)+\frac{1}{2}\alpha^4_k\\
    &\leq f_k(\alpha_k)+\frac{1}{2}\alpha^4_k,\qquad\text{(as $f_k(\nu)$ is increasing for $\nu\leq \frac{1}{2\alpha^2_k}$, and $\nu_k\leq \alpha_k \leq \frac{1}{2\alpha_0^2}\leq \frac{1}{2\alpha_k^2}$)}\\
    &= \alpha_k-\frac{1}{2}\alpha^4_k,\qquad \text{(putting the value back of $f$)}\\ 
    &\leq \alpha_k-\int_{x=k}^{k+1}\frac{1}{2}\alpha^4_kdx,\qquad \text{(dummy integral)}\\&= \alpha_k-\int_{x=k}^{k+1}\frac{1}{2}\alpha^4_xdx,\qquad \text{(as $\alpha_x$ is decreasing)}\\&\leq \alpha_k-\int_{x=k}^{k+1}\frac{1}{2}\alpha^4_kdx,\qquad \text{(dummy integral)}\\&= \alpha_k+\int_{x=k}^{k+1}\frac{d\alpha_x}{dx}dx,\qquad \text{(as $\frac{d\alpha_x}{dx} = -\frac{1}{2}\alpha^4_x$)}\\
    &\leq \alpha_{k+1},\qquad\text{(basic calculus).}
\end{align}
From induction arguments, we get $\nu_k \leq \alpha_k$ for all $k\geq 0$ given the base condition $\nu_0\leq \alpha_0$ is satisfied. In other words,
\begin{align}
  c_1^{\frac{2}{3}}c_2^{-\frac{1}{3}}u_k \leq \alpha_k = \Bigm(\frac{1}{\frac{1}{\nu^3_0}+ 2k}\Bigm)^{\frac{1}{3}}.
\end{align}

\end{proof}

\section{Numerical Simulations}
This section numerically illustrate with convergence rate of single-time-scale Algorithm \ref{main:alg:AC} with different step size schedule. All MDPs have randomly generated transition kernel and reward function, with codes available at \url{https://anonymous.4open.science/r/AC-C43E/}. For simplicity, the samples are generated uniformly instead of discounted occupation measure.

 Figure \ref{fig:50_5_1_1_21} illustrates that the learning rate $\eta_k=\beta_k = k^{-\frac{2}{3}}$ has the best performance.  Notably, slow decaying learning rates such as $\eta_k=\beta_k = 0.01k^{0}, k^{-\frac{1}{3}},k^{-\frac{1}{2}}$ have better performance in the starting, and eventually they surpassed by $\eta_k=\beta_k = k^{-\frac{2}{3}}$. In addition, $\eta_k=\beta_k = k^{-1}$ has the worst performance.

\begin{figure}[h]
    \centering
    \includegraphics[width=\linewidth]{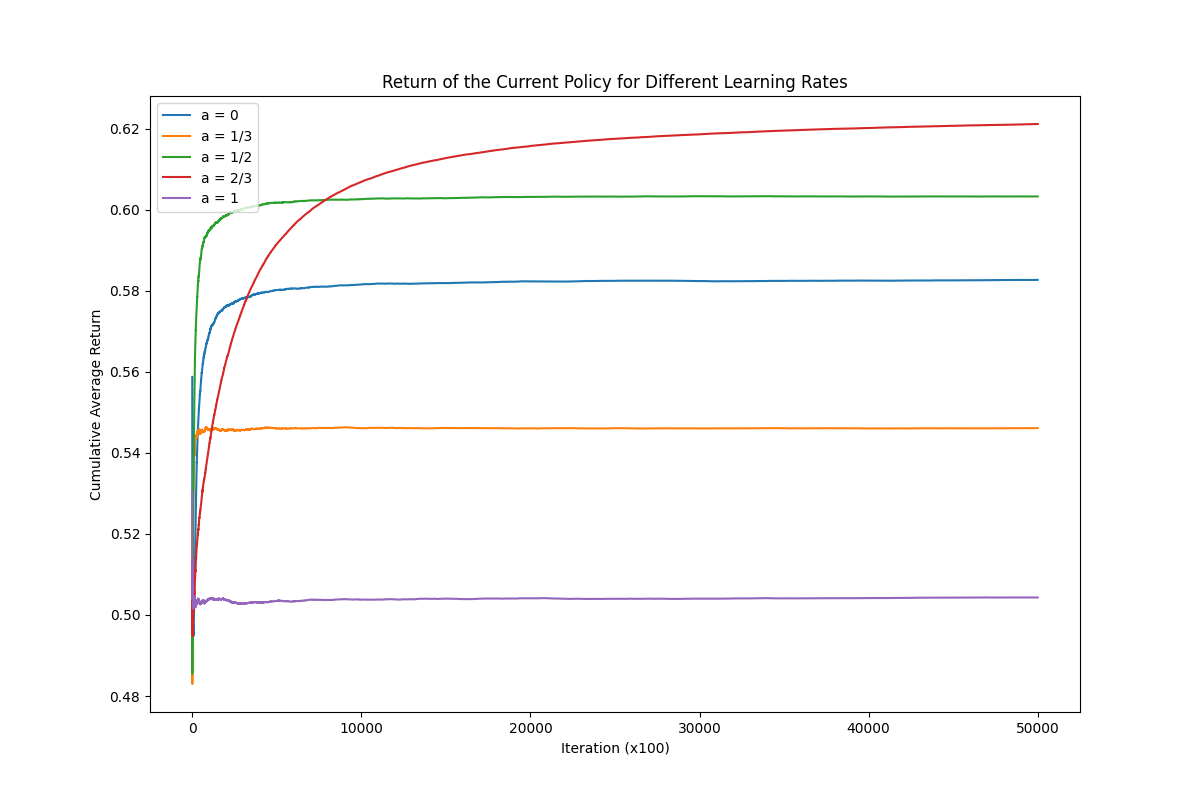}
    \caption{Convergence Rate of Algorithm \ref{main:alg:AC}, on random MDP with state space =$50$, action space = $5$, learning rate $\eta_k=\beta_k = k^{-a}$}\label{fig:50_5_1_1_21}
\end{figure}

\begin{figure}
    \centering
\includegraphics[width=\linewidth]{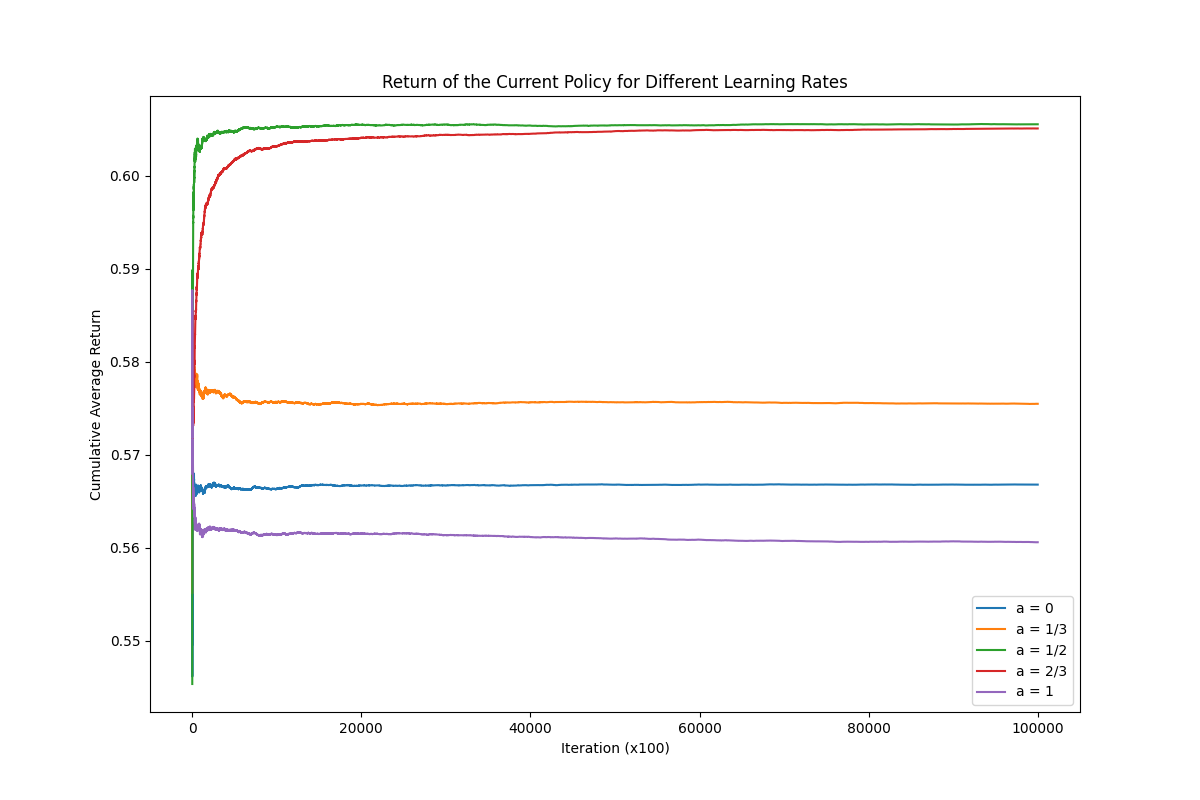}
    \caption{Convergence Rate of Algorithm \ref{main:alg:AC}, on random MDP with state space =$5$, action space = $2$, learning rate $10\eta_k=\beta_k = k^{-a}$}\label{fig:5215}
\end{figure}

\begin{figure}
    \centering
\includegraphics[width=\linewidth]{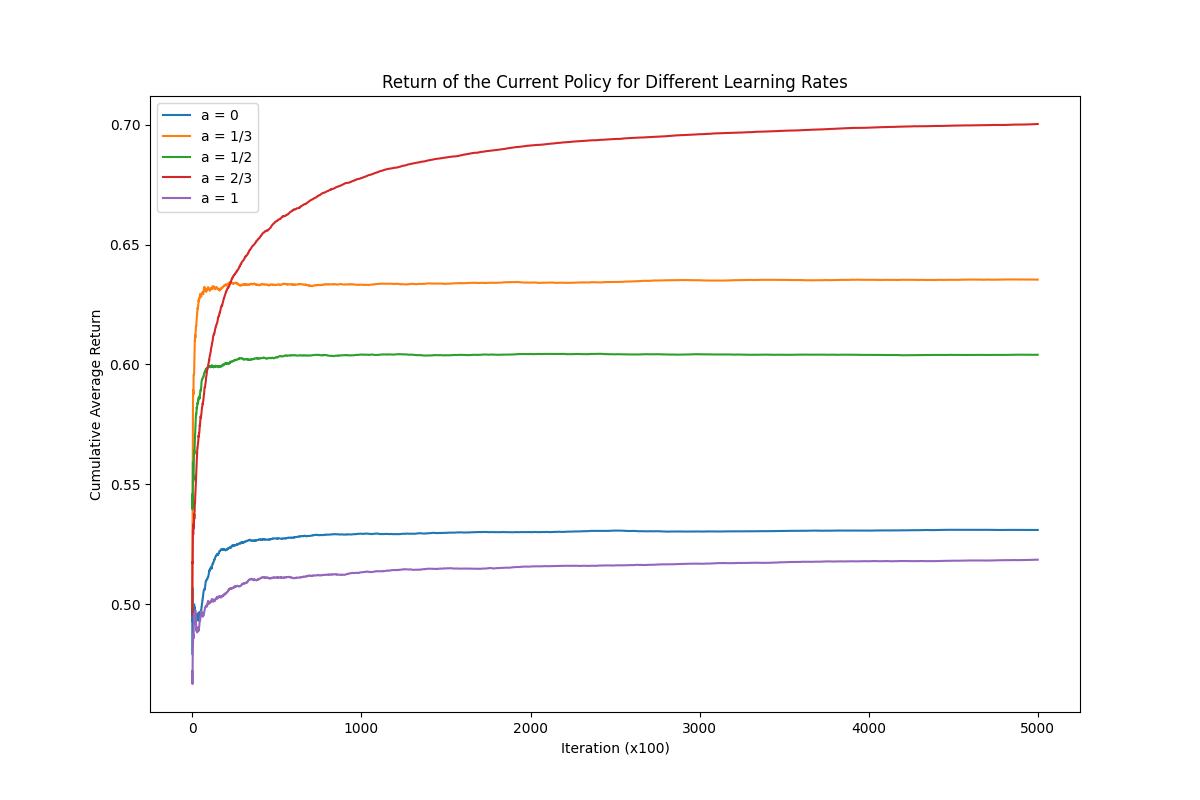}
    \caption{Convergence Rate of Algorithm \ref{main:alg:AC}, on random MDP with state space =$20$, action space = $5$, learning rate $\eta_k=\beta_k = k^{-a}$.}\label{fig:5215-two}
\end{figure}

\end{document}